\documentclass[11pt]{article}

\usepackage[margin=1in]{geometry}
\usepackage{graphicx}
\graphicspath{{./}}
\usepackage{multirow}
\usepackage{amsmath,amssymb,amsfonts}
\usepackage{amsthm}
\usepackage{mathrsfs}
\usepackage[title]{appendix}
\usepackage{xcolor}
\usepackage{textcomp}
\usepackage{caption}
\usepackage{booktabs}
\usepackage{algorithm}
\usepackage{algorithmicx}
\usepackage{algpseudocode}
\usepackage[numbers,sort&compress]{natbib}
\usepackage[hidelinks,breaklinks=true]{hyperref}

\hypersetup{
    pdftitle={MARGIN: Runtime Confidence Calibration for Multi-Agent Foundation Model Coordination},
    pdfauthor={Joss Armstrong},
    pdfsubject={Online confidence calibration for foundation model multi-agent systems},
    pdfkeywords={confidence calibration, multi-agent systems, foundation models, online learning, distribution shift}
}

\theoremstyle{plain}
\newtheorem{theorem}{Theorem}

\theoremstyle{definition}

\title{MARGIN: Runtime Confidence Calibration for Multi-Agent\\Foundation Model Coordination}

\author{%
Joss Armstrong\\
Ericsson, Athlone, Ireland\\
\texttt{joss.armstrong@ericsson.com}
}

\date{\today}

\begin{document}
\maketitle

\begin{abstract}
Foundation-model pools are increasingly used as black-box responders in coordinated systems, where a coordinator must decide which response to trust. Raw self-reported confidence is the natural signal for this decision, but it is not directly comparable across models and becomes stale under distribution shift when corrected only by design-time calibration. We study runtime confidence calibration for multi-model coordination, where per-model corrections are learned online from deployment outcomes with no model access, no held-out calibration data, and no retraining. Across 18 open-weight foundation models, 8 benchmarks, and over 44{,}000 observations, with problem-level paired confidence intervals throughout, we find that online adaptation is a family property. Simple same-information online calibrators close most of the calibration gap left by frozen design-time methods under shift, and the forgetting schedule is the dominant design axis within that family. We present MARGIN (Multi-Agent Runtime Grading via Incremental Normalisation), a structured member of this family that maintains per-model, per-confidence-band multiplicative factors using symmetric exponentially weighted updates and shrinkage blending. MARGIN does not dominate the online family on expected calibration error (ECE) under abrupt shift. Its value lies in interpretable confidence-band trust factors, defined cold-start and returning-model behaviour, dynamic-pool support, and a scoped symmetric-update guarantee for fixed-policy non-strategic agents. Empirically, raw verbalized confidence is a weak or misleading pairwise selection signal on hard code-generation tasks, strictly below random on three of the five benchmarks, while online calibration substantially improves pairwise resolution and multi-model selection. We also evaluate delayed and selected-answer-only feedback. Selected-answer-only feedback materially degrades every same-information online method, MARGIN included. These results position runtime calibration as a coordination layer for heterogeneous foundation-model pools, and MARGIN as one practical inspectable instantiation of that layer.

\medskip\noindent\textbf{Keywords:} confidence calibration, multi-agent systems, foundation models, online learning, distribution shift.
\end{abstract}

\section{Introduction}\label{sec:introduction}

Foundation models are increasingly deployed as autonomous agents that observe, reason, and act without human intervention~\cite{wang2024llmagents,wu2023autogen}. In multi-agent deployments, a coordinator receives predictions from several models and must decide which response to trust. Confidence is not merely a model diagnostic in this setting. It is a coordination signal that can affect voting, routing, delegation, and arbitration. The natural approach is to weight each model by its self-reported confidence. This assumes that confidence is informative, and that a model expressing 90\% confidence is more likely correct than one expressing 70\%.

The assumption is wrong. Studies consistently show that foundation model confidence is miscalibrated~\cite{guo2017calibration,xiong2024llm,geng2024survey}. A model claiming 90\% confidence may be correct only 60\% of the time. More concerning, the problem is not only scalar overconfidence. Confidence varies across models, across confidence bands within a model, and across serving tiers. A heterogeneous pool therefore lacks a shared confidence scale. In our experiments, raw verbalized confidence is a weak or misleading pairwise selection signal on the hardest code-generation benchmarks, with pairwise resolution of 43.4--50.0\% across the hard-codegen set.

Calibration methods exist. Temperature scaling~\cite{guo2017calibration}, Platt scaling~\cite{platt1999probabilistic}, and histogram binning~\cite{naeini2015obtaining} learn correction functions from held-out validation data. Recent work extends these to language models specifically, through auxiliary calibration models~\cite{shen2024thermometer}, confidence tuning~\cite{li2025conftuner}, and disagreement-aware alignment~\cite{daca2025}. These are design-time methods. They fit a correction once, before deployment, and the correction is then fixed. When the deployment distribution differs from the calibration set, as it inevitably does, the correction degrades. Our experiments show that design-time baselines degrade under distribution shift, with expected calibration error (ECE) rising from single digits to 39.3--67.9.

This creates a deployment problem. Foundation models operate in environments where the task distribution shifts continuously, new problem types appear, user behaviour changes, and model membership evolves. A coordinator therefore needs runtime calibration, a way to learn how much to trust each model's stated confidence from deployment outcomes themselves, per model and per confidence level, without access to model internals or a held-out calibration set.

We present MARGIN (Multi-Agent Runtime Grading via Incremental Normalisation), an online confidence calibration method for multi-agent foundation model systems. MARGIN should be read as a structured online calibrator rather than as an ECE-optimal estimator. It maintains per-model, per-confidence-band calibration factors that are updated continuously via symmetric exponentially weighted moving averages (EWMA). The method treats each model as a black box. It observes only predictions, stated confidence, and eventual outcomes. It requires no held-out calibration data, no access to logits or weights, and no retraining. Shrinkage blending stabilises estimates during the cold-start period. The entire method has three hyperparameters with robust defaults ($\alpha = 0.04$, $K = 3$ bands, $k_s = 100$) and negligible per-observation computational overhead, roughly $10^2$ floating-point operations per update, of the order of $10^{10}$ times cheaper than a single additional model inference.

We evaluate MARGIN across 18 foundation models (9 cloud API, 9 local), 8 benchmarks spanning code generation, question answering, and mathematics, and over 44{,}000 observations. The study cohort is mid-tier open-weight foundation models of late-2025 vintage, drawn from the Qwen, DeepSeek, GPT-OSS, MiniMax, GLM, LLaMA, Phi, and Mistral families, so the findings are indicative of this deployment class rather than of frontier proprietary systems. The mechanisms MARGIN relies on, band stratification and the exponential forgetting schedule, are architecture-level and expected to transfer to other pools that share the online black-box coordinator setting. The magnitudes we report are cohort-specific and would need to be re-measured on a different mix of models. The paper makes five contributions.

\begin{itemize}
    \item \textbf{Runtime calibration as a coordination problem.} We formulate black-box confidence calibration for heterogeneous foundation-model pools under deployment constraints, with no model internals, no held-out calibration set, online updates, delayed outcomes, and changing pool membership.
    \item \textbf{Empirical diagnosis of confidence non-comparability.} Across 18 foundation models and 8 benchmarks, we distinguish intra-model calibration error from inter-model confidence incomparability. Raw verbalized confidence is not a reliable coordination signal under shift, and the hardest code-generation settings remain weak pairwise signals.
    \item \textbf{A same-information online-family comparison.} We compare MARGIN with sliding-window histogram, decayed histogram, online Platt scaling, and windowed accuracy reweighting under identical feedback information. The main finding is that online adaptation is a family-level remedy for frozen calibration staleness, while the forgetting schedule is the dominant design axis under abrupt shift.
    \item \textbf{MARGIN as a structured online calibrator.} We present MARGIN as a per-model, per-confidence-band online calibrator using symmetric EWMA and shrinkage blending. MARGIN is not claimed to be ECE-dominant within the online family. Its specific value is interpretability, confidence-band stratification, cold-start semantics, dynamic-pool support, and compatibility with both verbalized and consistency confidence.
    \item \textbf{Feedback-regime evaluation and scoped theory.} We evaluate delayed outcomes, selected-answer-only feedback, agent dropout, cold start, and rolling replacement. We recall standard EWMA properties and prove one setting-specific result, symmetric two-rate EWMA is the unique unbiased member of that update family for fixed-policy non-strategic agents.
\end{itemize}

A single theorem, on the optimality of symmetric updates for non-strategic agents (Theorem~\ref{prop:symmetric}), formalises the setting-specific update claim. Standing results on the EWMA object itself, including exponential discounting, stationary-regime convergence, tracking speed, and order-statistics selection, are recalled with citations in Section~\ref{sec:background-results}. The central empirical result is that online adaptation from deployment feedback is the main remedy for frozen-calibration staleness. MARGIN's role in that picture is as a structured instantiation for coordinators that need interpretable per-model and per-band trust factors, cold-start semantics, and dynamic-pool support.

On the same-information family, MARGIN is not the ECE-dominant member. Under the abrupt shifts we study, hard-window methods beat exponential-forgetting methods, including both MARGIN and decayed histogram, the two exponential-forgetting members of the family. MARGIN trades some point-metric performance for properties useful in a coordinator, interpretable per-agent factors, defined cold-start semantics for agent churn, and the symmetric-shrinkage regime of Theorem~\ref{prop:symmetric}.

The remainder of this paper is organised as follows. Section~\ref{sec:problem} defines the deployment setting. Section~\ref{sec:related-work} surveys related work. Section~\ref{sec:method} presents the method. Section~\ref{sec:formal} states formal properties. Sections~\ref{sec:setup}--\ref{sec:results-pool} describe the experimental evaluation. Section~\ref{sec:ablation} reports ablation studies. Section~\ref{sec:discussion} discusses implications and limitations. Section~\ref{sec:conclusion} concludes.

\section{Problem Formulation}\label{sec:problem}

\textbf{Scope of the setting.} MARGIN targets the online, black-box calibration of foundation models that provide independent responses to a common task with promptly verifiable outcomes. ``Agent'' throughout this paper is a lightweight sense: a foundation model instance that produces $(\hat{y}, c)$ without carrying state between tasks and without adapting its own policy in response to the coordinator's calibration signal. The setting is a subset of the broader multi-agent-system space (planning agents, tool-using agents, agents with memory and internal deliberation loops), and the theoretical results are stated for this subset. Where the term ``agent'' is used elsewhere in the paper for a load-bearing property that requires more than the stateless-responder reading (specifically, the strategic-vs-epistemic distinction in Theorem~\ref{prop:symmetric} and Section~\ref{sec:discussion}, and the lifecycle behaviour in Section~\ref{sec:results-pool}), the surrounding text makes that reading explicit.

Consider a pool of $N$ foundation models $\mathcal{A} = \{a_1, \ldots, a_N\}$ deployed over a stream of tasks $\{q_1, q_2, \ldots\}$. For each task $q_t$, a subset $\mathcal{A}_t \subseteq \mathcal{A}$ provides responses. Each responding model $a_i$ produces a prediction $\hat{y}_{i,t}$ together with a confidence score $c_{i,t} \in [0,1]$. That score represents the model's self-assessed probability of correctness. After a delay, a binary outcome $o_{i,t} \in \{0,1\}$ is observed, where $o_{i,t} = 1$ if $\hat{y}_{i,t}$ matches the ground truth and $o_{i,t} = 0$ otherwise.

The core problem is that $c_{i,t}$ is unreliable. Foundation models are frequently overconfident, and the degree of miscalibration varies across models, across confidence ranges within the same model, and over time as the task distribution shifts. A model that is well-calibrated on one benchmark may be catastrophically miscalibrated on another. Design-time calibration methods (temperature scaling, Platt scaling, histogram binning) fit a correction function to held-out data, but this correction degrades whenever the deployment distribution differs from the calibration set.

We impose three constraints that reflect realistic multi-agent deployment:

\begin{enumerate}
    \item \textbf{No model access.} Agents are black boxes. We observe only their predictions, stated confidence, and eventual outcomes. No access to internal logits, weights, or training data is available.
    \item \textbf{No held-out calibration set.} The method must learn from the task stream itself. In deployment, the distribution is unknown in advance and may shift at any time.
    \item \textbf{Online operation.} Calibration must update incrementally as new observations arrive, without reprocessing historical data.
\end{enumerate}

Compatibility of the compared methods against these constraints is uneven. The design-time baselines (temperature scaling, Platt scaling, histogram binning) satisfy 1 by acting only on the stated confidence, but violate 2 and 3 because they fit a correction on held-out data and then freeze it at deployment. The same-information online family satisfies all three by construction. Sliding-window histogram, decayed histogram, windowed accuracy reweighting, online Platt scaling, and MARGIN all learn from the deployment stream itself, using only observed confidence-outcome pairs and no held-out set. The contrast studied in this paper is therefore between frozen design-time fitting and deployment-stream online updating on the same information. The adaptive-conformal-quantile-scaling reference in Section~\ref{sec:results-shift} is an interval calibrator, included for problem-level paired confidence-interval comparison rather than as an in-family point competitor. Consistency confidence (Section~\ref{sec:dual-modality}) is a stronger raw signal but its $M\times$ per-task inference cost places it outside the deployment-scale spirit of constraint 3, and Section~\ref{sec:consistency} therefore reports it as shown-at-cost rather than as a constraint-compliant competitor.

Under these constraints, the goal is to learn a calibration function $f_i: [0,1] \to [0,1]$ for each model such that the calibrated confidence $\tilde{c}_{i,t} = f_i(c_{i,t})$ satisfies
\begin{equation}\label{eq:calibration-goal}
    \mathbb{P}(o_{i,t} = 1 \mid \tilde{c}_{i,t} = p) \approx p
\end{equation}
for all $p \in [0,1]$, and to use these calibrated confidences for multi-agent selection by choosing the most reliable response from the pool.

\section{Related Work}\label{sec:related-work}

MARGIN sits at the intersection of calibration, multi-agent coordination, reputation systems, and online learning. We survey each area and identify the gap that MARGIN fills.

\subsection{Design-Time Calibration}\label{sec:rw-design-time}

The calibration problem for neural networks was established by \citet{guo2017calibration}, who showed that modern deep networks are poorly calibrated and that a single learned temperature parameter can substantially reduce expected calibration error (ECE)~\cite{naeini2015obtaining} on held-out data. Platt scaling~\cite{platt1999probabilistic} fits a logistic regression, and histogram binning~\cite{naeini2015obtaining} provides a non-parametric alternative. \citet{minderer2021revisiting} revisited these findings for newer architectures and found that calibration properties vary substantially across model families, but that the methods themselves remain design-time. A correction is fitted once and applied without further adaptation.

Recent work has extended design-time calibration to large language models. \citet{shen2024thermometer} propose Thermometer, an auxiliary model trained across multiple tasks to produce calibrated confidence estimates for new tasks. ConfTuner~\cite{li2025conftuner} fine-tunes the language model itself to produce better-calibrated verbalized confidence. DACA~\cite{daca2025} performs post-hoc temperature calibration by aligning a post-trained model's confidence with a pre-trained reference on agreement examples. These methods represent the current state of the art in LLM calibration.

All design-time methods share a fundamental limitation: they produce a fixed correction that assumes the deployment distribution matches the calibration set. When it does not, the correction degrades. Temperature scaling, for instance, learns a single scalar. If the model is overconfident on one task type and underconfident on another, a single temperature cannot correct both. More critically, if the task distribution shifts after deployment, the correction becomes stale with no mechanism for recovery. \citet{ovadia2019trust} document this failure mode systematically across neural uncertainty estimators, showing that every method they evaluate degrades substantially under shift. Ensemble approaches~\cite{lakshminarayanan2017simple} improve uncertainty estimates but require training or fine-tuning multiple models.

\subsection{LLM Confidence and Uncertainty Estimation}\label{sec:rw-llm-confidence}

The reliability of LLM self-reported confidence has been studied extensively. \citet{kadavath2022language} showed that language models have partial self-knowledge about their own uncertainty, but that this self-assessment does not generalise across task distributions. \citet{xiong2024llm} conducted a systematic evaluation of confidence elicitation methods across frontier models and found that none produce well-calibrated outputs across tasks. Even GPT-4 achieved an AUROC of only 62.7\% for failure prediction, barely above random.

Two comprehensive surveys frame the current landscape. \citet{geng2024survey} survey confidence estimation and calibration methods for LLMs. They cover verbalized confidence, logit-based methods, ensemble approaches, and post-hoc calibration. \citet{liu2025uq} survey uncertainty quantification more broadly, including Bayesian approaches and conformal prediction~\cite{angelopoulos2022gentle}. Both surveys document the severity of the miscalibration problem but propose no runtime solution.

Confidence signals for LLMs fall into three categories. Verbalized confidence~\cite{tian2023just} prompts the model to state a numerical confidence alongside its prediction. This is broadly available but poorly calibrated, as models tend toward overconfidence and the mapping from internal uncertainty to a stated number is unreliable. Consistency confidence~\cite{wang2023selfconsistency} runs the same query multiple times and measures agreement across samples. A third line of work constructs semantic-level uncertainty measures over generated answers~\cite{kuhn2023semantic,farquhar2024detecting}, treating multiple sampled outputs as evidence of underlying uncertainty at the meaning level rather than the token level. MARGIN is agnostic to the confidence source and applies the same online calibration across modalities.

\subsection{Multi-Agent Coordination and Debate}\label{sec:rw-multi-agent}

Multi-agent debate and heterogeneous coordination frameworks (\citet{du2023improving}, AutoGen~\cite{wu2023autogen}, A-HMAD~\cite{zhou2025ahmad}, and the survey by \citet{smit2024mad}) address a different problem. They ask how multiple LLM instances should structure their interaction. \citet{lamalfa2025llms} argue that current LLM multi-agent systems lack core properties of classical MAS such as social interaction and structured environments. MARGIN operates at a layer beneath these approaches. Whatever coordination structure is used, the confidence signal that structure consumes must first be calibrated. Debate weighted by raw confidence amplifies overconfident models. Routing by raw confidence sends queries to the wrong model when confidence is inverted.

Confidence-based model selection represents a complementary approach. \citet{gerych2024whoknows} train an auxiliary regression model to predict each LLM's confidence for a given query and route the query to the most confident model-prompt pair. \citet{chen2023frugalgpt} propose FrugalGPT, a cost-aware cascade that routes queries to progressively more capable models until a learned confidence threshold is met. Both approaches assume that raw or learned confidence is a reliable signal for routing. Neither tracks prediction outcomes to compute calibration factors, and neither adjusts confidence values based on demonstrated per-model, per-band reliability.

\subsection{Trust and Reputation Systems}\label{sec:rw-trust}

Trust and reputation systems have a long history in multi-agent and distributed systems. \citet{josang2007survey} survey the landscape, covering computational trust models, reputation aggregation, and the distinction between direct experience and third-party recommendations. EigenTrust~\cite{kamvar2003eigentrust} computes global reputation scores in peer-to-peer networks by iterating local trust assessments, achieving robust reputation even under adversarial conditions.

These systems track \emph{overall} model reliability. The question is whether a model is generally trustworthy. MARGIN tracks something more specific, namely \emph{conditional} reliability as a function of stated confidence level. A model might be highly reliable when it expresses moderate confidence but systematically overconfident at high confidence levels. A single reputation score cannot capture this variation. MARGIN's per-band calibration factors provide the fine-grained correction that flat reputation cannot. The band partition is the load-bearing design axis distinguishing MARGIN from flat reputation, and the $K$ sweep in Section~\ref{sec:abl-bands} across the eleven shift conditions locates the default $K = 3$ inside the stable region.

Classical reputation systems assume strategic models and use asymmetric update rules. Foundation models are non-strategic in the sense of Section~\ref{sec:problem}, and Theorem~\ref{prop:symmetric} identifies symmetric updates as the appropriate rule for that regime (formally proved and empirically characterised in Section~\ref{sec:abl-asymmetric}).

\subsection{Online Learning and Calibration in Non-Stationary Environments}\label{sec:rw-online}

\citet{dawid1982calibrated} gave the formal definition of calibration. A sequence of probability forecasts is calibrated if, conditional on any stated probability $p$, the observed outcome frequency converges to $p$. \citet{foster1998asymptotic} subsequently established that asymptotic calibration is achievable by a randomised forecasting rule even against adversarial sequences, a cornerstone result for online calibration. \citet{cesabianchi2006prediction} develop the general theory of prediction with expert advice and sequential learning that underlies modern online calibration methods.

The prediction-with-expert-advice setting is directly analogous to MARGIN's selection problem. At each round the coordinator receives predictions from several models (experts), commits to a combined action, then observes an outcome and updates weights. The classical regret bounds in that literature target the cumulative loss against the best fixed expert in hindsight and assume a static loss function on a bounded outcome. MARGIN differs in two ways. First, the coordinator does not need a no-regret guarantee against the best fixed model. It needs a well-calibrated confidence signal per model, so that the confidence-weighted vote of Eq.~\eqref{eq:selection} tracks the best model when experts disagree. Second, the per-band structure implicitly indexes experts by (model, confidence stratum) rather than by model alone, which is not the object of study in the standard experts framework. We do not import a regret bound from that literature. We cite the setting as the closest available analog and leave a direct regret analysis of MARGIN to future work.

\emph{Multicalibration}~\cite{hebert2018multicalibration} refines the calibration criterion. It requires calibration to hold not just marginally but simultaneously across a rich family of subgroups defined over the covariate space. MARGIN's per-model, per-band factors are a specific instance of this idea. The ``subgroups'' are the (model, confidence-band) pairs, and the calibration factor $\gamma_{i,k}$ enforces per-subgroup calibration on the observed stream. What multicalibration provides in the design-time supervised-learning setting, MARGIN provides in the online black-box setting for a fixed subgroup scheme. Extending MARGIN to a rich learned-subgroup family would parallel the multicalibration literature.

The exponentially weighted moving average (EWMA) is a classical tool for tracking non-stationary statistics~\cite{hunter1986ewma}. Originally developed for statistical process control, the EWMA assigns exponentially decaying weights to past observations. This provides a principled tradeoff between tracking speed and estimation noise. The effective memory window of approximately $1/\alpha$ observations makes the method inherently adaptive. Stale observations are automatically down-weighted as the environment changes. A closely related line of work in conformal prediction adapts to distribution shift at the prediction-set level. Gibbs and Cand{\`e}s~\cite{gibbs2021adaptive} show that conformal thresholds can be updated online to maintain target coverage under arbitrary shift. MARGIN and adaptive conformal inference are complementary. The former learns point calibration factors suitable for multi-agent selection, while the latter produces distribution-free prediction intervals.

\emph{Concrete online per-agent calibrators on the same information stream.} The design-time primitives named above (temperature scaling, Platt scaling, histogram binning) each admit a natural online variant that operates on the deployment stream itself. It uses only stated confidence and the eventual outcome, with no model access and no held-out calibration set. Sliding-window histogram is histogram binning applied to a moving window over recent observations. Decayed histogram replaces the flat window with an exponentially weighted average over per-bin accuracy. Online Platt scaling updates the Platt logistic parameters by stochastic gradient descent as each outcome arrives. It uses standard online logistic regression machinery. Windowed accuracy reweighting replaces or multiplies the calibration factor by the recent-window accuracy-to-stated-confidence ratio at each step. These are the concrete same-information family MARGIN sits inside, and they form the online-baseline set introduced in Section~\ref{sec:baselines} and evaluated in Sections~\ref{sec:results-shift} onwards. Where no single canonical citation exists for a given online form, we state that plainly in Section~\ref{sec:baselines} rather than invent an attribution. The specific update rule used in each case is stated in Section~\ref{sec:baselines}.

MARGIN's contribution is not the EWMA itself but its application to confidence calibration in a structured way. It combines per-model, per-confidence-band tracking with shrinkage blending and a formal analysis of symmetric versus asymmetric updates. The per-band structure enables fine-grained calibration profiles that a single EWMA per model cannot provide. The shrinkage blending addresses the cold-start problem inherent in stratified tracking, where some (model, band) pairs may accumulate observations slowly. The unbiasedness of the symmetric update within the two-rate EWMA family, specific to the non-strategic model setting, has not to our knowledge been formalised previously in this form.

Table~\ref{tab:related-summary} summarises the positioning of MARGIN relative to prior work along five dimensions.

\begin{table}[t]
\caption{Positioning of MARGIN relative to prior work on five dimensions. The online same-information family shares the runtime, no-held-out-set, black-box, and shift-adaptive properties, while only part of that family is per-band.}\label{tab:related-summary}
\centering
\scriptsize
\setlength{\tabcolsep}{2pt}
\begin{tabular}{p{3.3cm}ccccc}
\toprule
\textbf{Method} & \textbf{Run} & \textbf{No holdout} & \textbf{Band} & \textbf{BB} & \textbf{Adaptive} \\
\midrule
\multicolumn{6}{l}{\emph{Design-time calibration}} \\
Temp.\ scaling~\cite{guo2017calibration} & \texttimes & \texttimes & \texttimes & \checkmark & \texttimes \\
Platt scaling~\cite{platt1999probabilistic} & \texttimes & \texttimes & \texttimes & \checkmark & \texttimes \\
Histogram binning~\cite{naeini2015obtaining} & \texttimes & \texttimes & \checkmark & \checkmark & \texttimes \\
Thermometer~\cite{shen2024thermometer} & \texttimes & \texttimes & \texttimes & \texttimes & \texttimes \\
ConfTuner~\cite{li2025conftuner} & \texttimes & \texttimes & \texttimes & \texttimes & \texttimes \\
Conf.-based search~\cite{gerych2024whoknows} & \texttimes & \texttimes & \texttimes & \checkmark & \texttimes \\
Self-consistency~\cite{wang2023selfconsistency} & runtime$^{*}$ & \checkmark & \texttimes & \checkmark & \texttimes \\
Semantic entropy~\cite{kuhn2023semantic,farquhar2024detecting} & runtime$^{*}$ & \checkmark & \texttimes & partial & \texttimes \\
\midrule
\multicolumn{6}{l}{\emph{Online same-information family}} \\
Sliding-window histogram & \checkmark & \checkmark & \checkmark & \checkmark & \checkmark \\
Decayed histogram & \checkmark & \checkmark & \checkmark & \checkmark & \checkmark \\
Win. acc. reweighting (replace) & \checkmark & \checkmark & \texttimes & \checkmark & \checkmark \\
Win. acc. reweighting (multiply) & \checkmark & \checkmark & \texttimes & \checkmark & \checkmark \\
Online Platt scaling & \checkmark & \checkmark & \texttimes & \checkmark & \checkmark \\
\midrule
\multicolumn{6}{l}{\emph{Distribution-free reference and reputation}} \\
Adaptive conformal~\cite{gibbs2021adaptive} & \checkmark & \checkmark & \texttimes & \checkmark & \checkmark \\
EigenTrust~\cite{kamvar2003eigentrust} & \checkmark & \checkmark & \texttimes & \checkmark & \checkmark \\
\midrule
\textbf{MARGIN} & \checkmark & \checkmark & \checkmark & \checkmark & \checkmark \\
\bottomrule
\end{tabular}
\smallskip

\noindent\footnotesize{$^{*}$ Runtime scoring only. These methods produce a confidence score at query time but do not learn from deployment outcomes. The mapping from the raw score to a probability of correctness is not updated after deployment. BB = black-box. The online same-information family is evaluated in Sections~\ref{sec:results-shift}--\ref{sec:results-selection}.}
\end{table}

\section{Method}\label{sec:method}

Section~\ref{sec:problem} defines the deployment regime and the constraints under which MARGIN operates. This section presents the calibrator itself. It consists of a per-model, per-confidence-band EWMA object with shrinkage blending, and the confidence-weighted-vote selection rule that consumes it.

\subsection{Confidence-Band Stratified Tracking}\label{sec:bands}

Rather than learning a single calibration correction per model, MARGIN partitions the confidence range $[0,1]$ into $K$ disjoint bands $\{B_1, \ldots, B_K\}$ and maintains a separate calibration factor for each (model, band) pair. This reflects the empirical observation that miscalibration is not uniform across confidence levels. A model may be well-calibrated when it expresses moderate confidence but severely overconfident at high confidence, or vice versa.

We use $K = 3$ equal-width bands by default:
\begin{equation}
    B_1 = [0, \tfrac{1}{3}), \quad B_2 = [\tfrac{1}{3}, \tfrac{2}{3}), \quad B_3 = [\tfrac{2}{3}, 1].
\end{equation}

Let $\kappa(c) \in \{1, \ldots, K\}$ denote the band index for confidence value $c$. For each model $a_i$ and band $k$, we maintain two running estimates:
\begin{itemize}
    \item $\hat{a}_{i,k}$: the empirical accuracy rate of model $a_i$ when its confidence falls in band $B_k$,
    \item $\bar{c}_{i,k}$: the mean confidence expressed by model $a_i$ within band $B_k$.
\end{itemize}

Both are initialised to the band midpoint $m_k$, so that the initial calibration factor $\gamma_{i,k} = \hat{a}_{i,k} / \bar{c}_{i,k} = 1$ and raw confidence passes through unchanged before any observations.

The choice of $K = 3$ balances calibration granularity against per-band sample requirements. Fewer bands accumulate observations faster and are more robust under severe distribution shift, where each band must re-learn its factor from limited data. More bands provide finer-grained correction when observations are plentiful. We evaluate this tradeoff empirically in Section~\ref{sec:abl-bands}. MARGIN's three defaults, $\alpha = 0.04$, $K = 3$ bands, and $k_s = 100$, are presented here as production defaults and validated by the sensitivity analyses in Section~\ref{sec:ablation}. The learning-rate sweep in Section~\ref{sec:abl-alpha} places $\alpha = 0.04$ in the broad U-shape minimum, the band-count sweep in Section~\ref{sec:abl-bands} places $K = 3$ within $1.0$~pp of the per-condition argmin on all eleven shift conditions, and the shrinkage sweep in Section~\ref{sec:abl-shrinkage} characterises the $k_s = 100$ default as a trade-off across shift regimes.

\subsection{EWMA Update Mechanism}\label{sec:ewma}

Both running estimates are updated using an exponentially weighted moving average (EWMA) with a constant learning rate $\alpha \in (0, 1)$. When model $a_i$ produces a prediction at time $t$ with confidence $c_{i,t} \in B_k$ and outcome $o_{i,t}$ is subsequently observed, the updates are:
\begin{align}
    \hat{a}_{i,k} &\leftarrow (1 - \alpha)\,\hat{a}_{i,k} + \alpha\, o_{i,t}, \label{eq:ewma-accuracy}\\
    \bar{c}_{i,k} &\leftarrow (1 - \alpha)\,\bar{c}_{i,k} + \alpha\, c_{i,t}. \label{eq:ewma-confidence}
\end{align}

The constant learning rate gives exponentially decaying weights to past observations. The weight on an observation $\tau$ steps in the past is $(1 - \alpha)^\tau \cdot \alpha$, with an effective memory window of approximately $1/\alpha$ observations. This is the key property that enables adaptation to distribution shift. Unlike a simple running average, which gives equal weight to all past observations, the EWMA automatically down-weights stale observations as the environment changes. The tradeoff is between tracking speed (higher $\alpha$, faster adaptation to shift) and estimation noise (lower $\alpha$, more stable estimates under stationarity). The bias-variance tradeoff is recalled in Section~\ref{sec:background-results}, and we evaluate it empirically in Section~\ref{sec:abl-alpha}.

\textbf{Symmetric updates.} A natural alternative is to use asymmetric learning rates: a larger $\alpha_\text{down}$ when observed accuracy falls below the current estimate (penalising overconfidence faster) and a smaller $\alpha_\text{up}$ otherwise. This is appropriate when models can strategically manipulate their confidence in response to the calibration signal. In the deployment we study, no feedback of any kind reaches the models: neither the coordinator's selection signal nor the observed outcome is routed back to a model, and each model's weights are fixed at inference, so the fixed-policy premise holds by construction. In deployment more broadly, per-response outcome feedback and per-model weight updates are typically absent, so the premise typically holds outside the study too. Under fixed policies the confidence errors are epistemic rather than strategic and approximately zero-mean over time. Symmetric EWMA is then an unbiased estimator of the true accuracy rate, while asymmetric updates introduce systematic bias proportional to $|\alpha_\text{up} - \alpha_\text{down}|$ (Theorem~\ref{prop:symmetric}). We use $\alpha = 0.04$ throughout and characterise the ablation picture in Section~\ref{sec:abl-asymmetric}: on the canonical implementation the symmetric rate is the argmin on the severe and mild representative conditions and on the mean across all three, a single condition-tuned asymmetric configuration wins on the third by $5.3$~pp in interaction with the fixed shrinkage default, and asymmetry tuned against the regime is catastrophic (ECE degradation up to $8.2\times$).

The per-band calibration factor is then:
\begin{equation}\label{eq:gamma}
    \gamma_{i,k} = \frac{\hat{a}_{i,k}}{\bar{c}_{i,k}},
\end{equation}
representing the ratio of observed accuracy to stated confidence within the band. If a model consistently achieves 60\% accuracy when expressing 90\% confidence, $\gamma_{i,k} \approx 0.67$, appropriately discounting future high-confidence predictions.

\subsection{Shrinkage Blending}\label{sec:shrinkage}

The per-band calibration factor $\gamma_{i,k}$ is the most informative estimate when sufficient observations have accumulated in band $k$. Early in the observation stream, however, some (model, band) pairs may have very few observations, leading to high-variance estimates. We address this with a hierarchical shrinkage scheme that blends the band-level factor toward a more stable model-level estimate.

Let $\gamma_{i,\cdot}$ denote the model-level calibration factor for model $a_i$, computed from the EWMA accuracy and confidence estimates aggregated across all bands. Let $n_{i,k}$ denote the number of observations accumulated for model $a_i$ in band $k$. The effective calibration factor is:
\begin{equation}\label{eq:blending}
    \gamma_{i,k}^\text{eff} = \frac{n_{i,k}}{n_{i,k} + k_s}\,\gamma_{i,k} + \frac{k_s}{n_{i,k} + k_s}\,\gamma_{i,\cdot},
\end{equation}
where $k_s > 0$ is the shrinkage constant controlling the blending rate. When $n_{i,k}$ is small relative to $k_s$, the estimate is pulled toward the model-level factor. As observations accumulate, the band-level factor dominates. The construction is a credibility-weighted convex combination of a within-band estimator and a cross-band average, following the actuarial credibility framing of B{\"u}hlmann~\cite{buhlmann1967credibility}. We do not appeal to a prior--likelihood--posterior derivation and therefore adopt the plain term \emph{shrinkage blending} throughout.

We use $k_s = 100$ as the default, which provides meaningful shrinkage during the first $\sim$100 observations per band while converging to the pure band-level factor thereafter. The no-blending case ($k_s = 0$) already outperforms all design-time baselines on the representative conditions. Shrinkage at $k_s = 100$ trades off across regimes: it helps monotonically on severe shift, has a shallow U-shape with minimum near $k_s = 50$ on mild shift, and hurts monotonically on the moderate-severe MBPP $\to$ CC condition (Section~\ref{sec:abl-shrinkage}). $k_s = 100$ is a fixed deployment-honest default rather than a per-condition oracle choice. See the recommended-configuration paragraph in Section~\ref{sec:abl-shrinkage} for the compromise argument.

\textbf{Cold start and returning-model policy.} The same shrinkage-blending object also governs behaviour at pool entry and return, so that the paper describes one calibrator rather than two. At entry, a new model's $\gamma_{i,k}$ is initialised to the pool-wide band-$k$ average $\bar{\gamma}_{\cdot,k}$ (or $1$ if no other model has yet accumulated observations in that band), and $n_{i,k} \leftarrow 0$. Eq.~\eqref{eq:blending} then dominates from the model-level $\gamma_{i,\cdot}$ side, which is itself initialised from the pool average. As within-band observations accumulate, credibility passes to the model's own history. A returning model keeps its previous $\gamma_{i,k}$ and $n_{i,k}$. The EWMA's exponential forgetting handles staleness without a special-case rule. The Section~\ref{sec:results-pool} experiments use exactly this specification.

\subsection{Calibrated Confidence Weighting}\label{sec:selection}

Given the effective calibration factor, the calibrated confidence for model $a_i$ at time $t$ is:
\begin{equation}\label{eq:calibrated-conf}
    \tilde{c}_{i,t} = \gamma_{i,\kappa(c_{i,t})}^\text{eff} \cdot c_{i,t}.
\end{equation}

For multi-agent selection, we aggregate across the responding pool using confidence-weighted voting. For each candidate answer $y$ in the response set, the aggregated score is:
\begin{equation}\label{eq:selection}
    s(y) = \sum_{i \in \mathcal{A}_t} \tilde{c}_{i,t} \cdot \mathbf{1}[\hat{y}_{i,t} = y],
\end{equation}
and the selected answer is $\hat{y}_t = \arg\max_y\, s(y)$.

This rule has a clear interpretation. Each model's vote is weighted by its calibrated confidence, so that a highly confident but historically unreliable model contributes less than a moderately confident but well-calibrated one. When calibration is poor (as with raw confidence), overconfident models dominate the vote and the selection can perform worse than random. When calibration is accurate, the weighting naturally favours the model most likely to be correct.

\subsection{Dual Modality}\label{sec:dual-modality}

MARGIN is agnostic to the source of the confidence signal. We evaluate two modalities:

\textbf{Verbalized confidence.} The model is prompted to state its confidence as a numerical value alongside its prediction. This is the most broadly available signal, requiring no special infrastructure beyond a prompt template. However, verbalized confidence is known to be poorly calibrated. Foundation models tend toward overconfidence, and the mapping from internal uncertainty to a stated number is unreliable~\cite{xiong2024llm}.

\textbf{Consistency confidence.} The same query is presented to the model $M$ times with non-zero temperature, and the confidence is computed as the fraction of runs producing the same answer:
\begin{equation}\label{eq:consistency}
    c_{i,t}^\text{cons} = \frac{1}{M} \sum_{m=1}^{M} \mathbf{1}[\hat{y}_{i,t}^{(m)} = \hat{y}_{i,t}^{(1)}].
\end{equation}
Consistency confidence is more expensive ($M\times$ the inference cost) but provides a behavioural measure of uncertainty that does not depend on the model's ability to introspect.

MARGIN applies the same per-band EWMA calibration to both modalities independently. No modality-specific tuning is required: the same $\alpha$, band count, and shrinkage constant are used throughout. We compare both modalities in the experimental evaluation and find that consistency confidence provides a stronger base signal, but MARGIN improves both substantially.

\subsection{Background Analytical Results}\label{sec:background-results}

The per-(model, band) EWMA object of Sections~\ref{sec:ewma}--\ref{sec:shrinkage} inherits four standard properties from the statistical-process-control and adaptive-filtering literatures~\cite{hunter1986ewma,roberts1959spc,haykin2002adaptive}, and its use for selection inherits a fifth from order statistics~\cite{david2003order}. We recall the functional forms below and defer the arithmetic to Appendix~\ref{app:derivations}.

\textbf{Exponential discounting.} Under the EWMA recursion of Eq.~\eqref{eq:ewma-accuracy}, the weight on observation $X_\tau$ at time $t$ is $\alpha(1-\alpha)^{t-\tau}$, exponentially decaying with age, and the effective memory window is approximately $1/\alpha$ observations. The residual weight $(1-\alpha)^t$ is carried by the initialisation. Roberts~\cite{roberts1959spc} introduced the geometric-moving-average chart of which this is a direct instance.

\textbf{Convergence under stationarity.} Under i.i.d.\ $\mathrm{Bernoulli}(\theta)$ outcomes, $\mathbb{E}[\hat a_t] \to \theta$ and $\mathrm{Var}(\hat a_t) \to \alpha\theta(1-\theta)/(2-\alpha)$~\cite{hunter1986ewma,roberts1959spc}. The bias $(1-\alpha)^t(\hat a_0 - \theta)$ decays geometrically. For $\alpha = 0.04$ and $\theta = 0.79$ (the observed mean accuracy in the high-confidence band), the predicted steady-state standard deviation is $\sqrt{0.04 \times 0.79 \times 0.21 / 1.96} \approx 0.058$, matching an asymptotic bootstrap standard deviation exceeding 0.05 in that band across 100 question orderings.

\textbf{Tracking speed under a step shift.} After an instantaneous shift from $\theta$ to $\theta' = \theta + \Delta$, reducing the bias below $\varepsilon$ requires approximately $n \geq \alpha^{-1}\log(|\hat a_{t_0}-\theta'|/\varepsilon)$ observations (standard adaptive-filtering result, e.g.\ Haykin~\cite{haykin2002adaptive}). Under drift at rate $\delta$ per observation, the steady-state error decomposes into a tracking-lag term $\delta/(2\alpha)$ and an estimation-noise term $\sqrt{\alpha\theta(1-\theta)/(2-\alpha)}$, with the U-shape minimum at $\alpha^\star = O(\delta^{2/3})$. In practice the U-shape is broad and $\alpha = 0.04$ works well across a range of drift rates (Section~\ref{sec:abl-alpha}).

\textbf{Order-statistics baseline for selection.} For $N$ models with true accuracies $p_1 > p_2 \geq \cdots \geq p_N$ and calibrated confidences carrying additive zero-mean noise of variance $\sigma^2$, the probability that $\arg\max$-confidence selects the best model decreases monotonically in $\sigma^2$. When raw confidence is negatively correlated with accuracy, selection performs below random. This follows from standard results on the probability that the maximum of correlated Gaussians corresponds to the highest-mean component~\cite{david2003order}.

These four properties characterise the EWMA-per-band object independently of MARGIN. The per-band structure, the shrinkage-blending combination of within-band and cross-band estimators, and the empirical evaluation across 18 models and 8 benchmarks are the paper's contributions. The standing results above license the numerical values of $\alpha$, $K$, and $k_s$ but do not by themselves constitute the calibrator.

\subsection{Scope of the Theory}\label{sec:theory-scope}

The formal properties of Section~\ref{sec:formal} concern the per-model, per-band EWMA object. Three points where the theory does not automatically lift to the deployed multi-agent selection rule are worth stating openly.

\textbf{Order-statistics selection baseline scope.} The order-statistics baseline recalled in Section~\ref{sec:background-results} analyses the choice of a single best model from a pool whose calibrated confidences carry independent, common-variance noise. Two departures from this apply to the implementation. First, MARGIN as implemented (Eq.~\ref{eq:selection}) does not select a single best model. It performs answer-level weighted voting, aggregating $\tilde{c}_{i,t}$ over models that produced the same answer. When answers form a small number of clusters (as in code generation, where several models often converge on the same solution up to whitespace), the vote-aggregation regime is closer to a plurality selector than to the argmax on the highest-confidence model, and the monotonicity result applies only to the underlying model ranking rather than to the specific rule. Second, the independent-common-variance-noise assumption is violated whenever two or more models share failure modes (correlated errors on the same benchmark subclass, family-level miscalibration for a foundation-model family, common training-data contamination on a benchmark). Correlated errors compound in the weighted vote: two miscalibrated models that agree on a wrong answer contribute their sum, not the maximum. The selection baseline therefore lends a directional result -- reducing calibration error cannot hurt selection quality in the direction of the ranking -- rather than a quantitative selection guarantee for the implementation.

\textbf{Extension of the single-estimator theory to the multi-agent setting.} The recalled EWMA properties in Section~\ref{sec:background-results} concern a single (model, band) estimator in isolation, and Theorem~\ref{prop:symmetric} concerns the symmetric-versus-asymmetric distinction in that same setting. The selection stage introduces cross-model dependence that these results do not directly cover. The steady-state variance recalled in Section~\ref{sec:background-results} is per-model-per-band, whereas the pool-level selection statistic mixes models whose EWMAs have different histories and different effective sample sizes. Explicit bounds on the pool-level selection probability under heterogeneous per-model variances, and under the correlated-error regime described above, are not implied by the single-estimator analysis and would need a separate argument. We do not claim them.

\textbf{Beyond binary outcomes.} The updates in Algorithm~\ref{alg:margin} assume $o_{i,t} \in \{0,1\}$. The construction extends without new theorems to any bounded outcome signal $o_{i,t} \in [0,1]$, for example the fraction of subtasks passed on a compound problem, since the EWMA recurrence and the shrinkage-blending combination are convex in the outcome. Convergence and tracking follow directly. Categorical 1-of-$N$ outcomes require a small structural change, tracking one EWMA per class per (model, band) with the outcome $\mathbf{1}[y_t = j]$ replacing the binary $X_t$. The unbiasedness of the symmetric update then applies within each class. We report only the binary-outcome experiments in this paper and do not empirically evaluate either extension.

Algorithm~\ref{alg:margin} gives the complete procedure, and Figure~\ref{fig:method} summarises the pipeline. MARGIN has three hyperparameters: the learning rate $\alpha$, the number of confidence bands $K$, and the shrinkage constant $k_s$. We use $\alpha = 0.04$, $K = 3$, and $k_s = 100$ as defaults throughout all experiments. Sensitivity to each is evaluated in Section~\ref{sec:ablation}. Throughout, ``model'' names the underlying technical object (a foundation model instance whose weights are fixed at inference), while ``agent'' names its participation in the multi-agent selection pool $\mathcal{A}_t$. Pool-lifecycle events (dropout, cold-start entry, rolling replacement) are stated in the agent register in Section~\ref{sec:results-pool}, because the object of interest there is pool membership rather than model identity.

\begin{figure}[t]
\centering
\includegraphics[width=\linewidth]{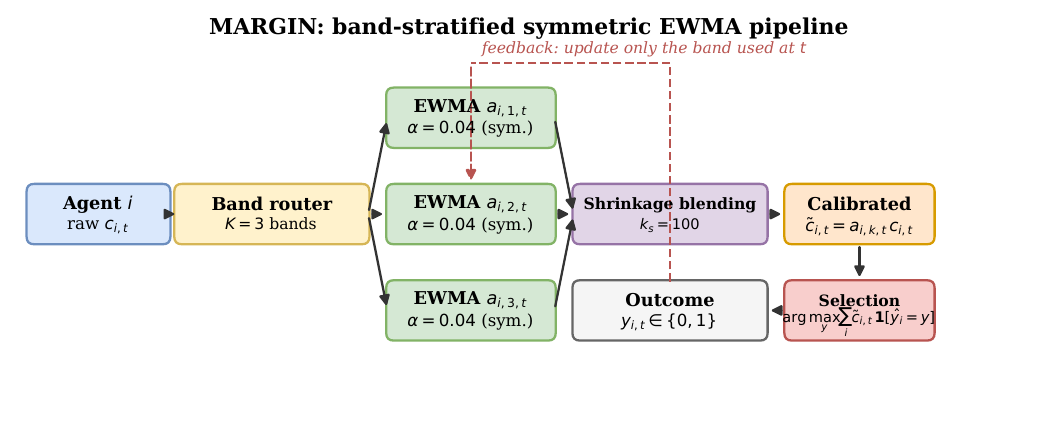}
\caption{MARGIN pipeline. Each model's raw confidence $c_{i,t}$ is routed to one of $K{=}3$ confidence bands. Each band maintains an independent symmetric EWMA of model accuracy (learning rate $\alpha{=}0.04$). Shrinkage blending ($k_s{=}100$) blends the band estimate with the model's cross-band estimate during the cold-start period to produce the calibrated confidence $\tilde c_{i,t}$, which drives multi-agent selection. After the outcome $y_{i,t}$ is observed, only the band used at time $t$ is updated (dashed feedback).}\label{fig:method}
\end{figure}

\begin{algorithm}[t]
\caption{MARGIN: Online Confidence Calibration and Multi-Agent Selection}\label{alg:margin}
\begin{algorithmic}[1]
\Require Agent pool $\mathcal{A}$, bands $\{B_1, \ldots, B_K\}$, learning rate $\alpha$, shrinkage $k_s$
\State \textbf{Initialise:} For all $i, k$: $\hat{a}_{i,k} \gets m_k$,\; $\bar{c}_{i,k} \gets m_k$,\; $n_{i,k} \gets 0$
\For{each task $q_t$}
    \For{each responding model $a_i \in \mathcal{A}_t$}
        \State Receive prediction $\hat{y}_{i,t}$ and confidence $c_{i,t}$
        \State $k \gets \kappa(c_{i,t})$ \Comment{Determine confidence band}
        \State Compute $\gamma_{i,k}^\text{eff}$ via Eq.~\eqref{eq:blending}
        \State $\tilde{c}_{i,t} \gets \gamma_{i,k}^\text{eff} \cdot c_{i,t}$ \Comment{Calibrated confidence}
    \EndFor
    \State $\hat{y}_t \gets \arg\max_y \sum_{i \in \mathcal{A}_t} \tilde{c}_{i,t} \cdot \mathbf{1}[\hat{y}_{i,t} = y]$ \Comment{Select answer}
    \State \textbf{Wait for} ground truth $y_t$
    \For{each responding model $a_i \in \mathcal{A}_t$}
        \State $o_{i,t} \gets \mathbf{1}[\hat{y}_{i,t} = y_t]$
        \State $k \gets \kappa(c_{i,t})$
        \State $\hat{a}_{i,k} \gets (1 - \alpha)\,\hat{a}_{i,k} + \alpha\, o_{i,t}$ \Comment{Update accuracy}
        \State $\bar{c}_{i,k} \gets (1 - \alpha)\,\bar{c}_{i,k} + \alpha\, c_{i,t}$ \Comment{Update confidence}
        \State $n_{i,k} \gets n_{i,k} + 1$
    \EndFor
\EndFor
\end{algorithmic}
\end{algorithm}

\section{Formal Property}\label{sec:formal}

The following theorem states the load-bearing formal claim of MARGIN. Standing results on the EWMA object itself are recalled in Section~\ref{sec:background-results}. In the design of Section~\ref{sec:problem}, no feedback of any kind reaches the models (no selection signal, no outcome signal) and each model's weights are fixed at inference, so within the study the confidence errors are epistemic rather than strategic by construction. The theorem below identifies the update rule appropriate for that regime.

Throughout, we consider a single (agent, band) pair and drop the subscripts $i, k$ for clarity. Let $X_1, X_2, \ldots$ be binary outcomes with $\mathbb{E}[X_t] = \theta$, and let $\hat{a}_0 \in [0,1]$ be an arbitrary initialisation.

\begin{theorem}[Unbiasedness of the Symmetric Update]\label{prop:symmetric}
Consider an asymmetric EWMA with learning rates $\alpha_\mathrm{up}$ (when $X_t = 1$) and $\alpha_\mathrm{down}$ (when $X_t = 0$). Under i.i.d.\ $\mathrm{Bernoulli}(\theta)$ outcomes, the steady-state expectation is
\begin{equation}\label{eq:asymmetric-ss}
    \mathbb{E}[\hat{a}_\infty] = \frac{\alpha_\mathrm{up}\,\theta}{\alpha_\mathrm{up}\,\theta + \alpha_\mathrm{down}\,(1 - \theta)}.
\end{equation}
This equals $\theta$ if and only if $\alpha_\mathrm{up} = \alpha_\mathrm{down}$. Otherwise, the asymptotic bias is
\begin{equation}\label{eq:asymmetric-bias}
    |\mathbb{E}[\hat{a}_\infty] - \theta| = \frac{\theta(1 - \theta)\,|\alpha_\mathrm{up} - \alpha_\mathrm{down}|}{\alpha_\mathrm{up}\,\theta + \alpha_\mathrm{down}\,(1 - \theta)}.
\end{equation}
For agents with fixed policies whose confidence errors are epistemic (zero-mean), the symmetric case is the unique unbiased member of the two-rate EWMA family: any $\alpha_\mathrm{up} \neq \alpha_\mathrm{down}$ introduces the systematic bias of Eq.~\eqref{eq:asymmetric-bias}.
\end{theorem}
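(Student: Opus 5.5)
The plan is to exploit the fact that, for the two-rate rule, the conditional expected update is \emph{affine} in the current estimate. That lets the mean $m_t := \mathbb{E}[\hat a_t]$ obey a deterministic linear recursion without knowing anything else about the law of $\hat a_t$.

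\textbf{Step 1: write the update.} I read the asymmetric EWMA, in the form of Eq.~\eqref{eq:ewma-accuracy}, as
\[
\hat a_{t+1} = \hat a_t + \alpha_\mathrm{up}(1-\hat a_t)\,X_{t+1} - \alpha_\mathrm{down}\,\hat a_t\,(1-X_{t+1}).
\]

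\textbf{Step 2: take conditional expectations.} Because $X_{t+1}$ is independent of $\hat a_t$, which is a function of $\hat a_0$ and $X_1,\dots,X_t$, conditioning on $\hat a_t$ gives
\[
\mathbb{E}[\hat a_{t+1}\mid \hat a_t] = \hat a_t + \theta\alpha_\mathrm{up}(1-\hat a_t) - (1-\theta)\alpha_\mathrm{down}\hat a_t .
\]
Taking total expectations, and defining $\lambda := \alpha_\mathrm{up}\theta + \alpha_\mathrm{down}(1-\theta)$, this becomes
\[
m_{t+1} = (1-\lambda)\,m_t + \alpha_\mathrm{up}\theta .
\]

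\textbf{Step 3: solve the recursion.} If $\alpha_\mathrm{up},\alpha_\mathrm{down}\in(0,1)$ and $\theta\in[0,1]$, then $\lambda\in(0,1)$. Unrolling gives
\[
m_t - m^\star = (1-\lambda)^t\,(\hat a_0 - m^\star), \qquad m^\star = \frac{\alpha_\mathrm{up}\theta}{\lambda}.
\]
Hence $m_t \to m^\star$ geometrically from any initialisation, which is Eq.~\eqref{eq:asymmetric-ss}. The symmetric case $\alpha_\mathrm{up}=\alpha_\mathrm{down}=\alpha$ recovers the bias term $(1-\alpha)^t(\hat a_0-\theta)$ recalled in Section~\ref{sec:background-results}, which serves as a sanity check.

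\textbf{Step 4: compute the bias.} Subtracting $\theta$ and placing everything over the common denominator $\lambda$ gives
\[
m^\star - \theta = \frac{\theta\alpha_\mathrm{up} - \theta\lambda}{\lambda} = \frac{\theta(1-\theta)(\alpha_\mathrm{up}-\alpha_\mathrm{down})}{\lambda}.
\]
Taking absolute values yields Eq.~\eqref{eq:asymmetric-bias}.

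\textbf{Step 5: the ``if and only if''.} The numerator vanishes exactly when $\alpha_\mathrm{up}=\alpha_\mathrm{down}$ or $\theta(1-\theta)=0$. So I will state explicitly that the equivalence requires a non-degenerate $\theta\in(0,1)$; for $\theta\in\{0,1\}$ every member of the family is asymptotically unbiased. Uniqueness of the symmetric member within the two-rate family then follows immediately for $\theta\in(0,1)$. The closing sentence about fixed-policy, epistemic agents is interpretive: it is exactly the i.i.d.\ (stationary $\theta$) hypothesis, so I will justify it by pointing back to the no-feedback design of Section~\ref{sec:problem} rather than proving anything new.

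\textbf{Main obstacle.} The only delicate point is the meaning of ``steady-state expectation''. $\hat a_t$ does not converge almost surely; it is a Markov chain that keeps fluctuating. I would therefore define $\mathbb{E}[\hat a_\infty]$ as $\lim_t \mathbb{E}[\hat a_t]$, which Step 3 shows exists and is independent of $\hat a_0$. Optionally, I would add that the chain has a unique stationary law by contraction, since the random affine maps have Lipschitz constants $1-\alpha_\mathrm{up}$ and $1-\alpha_\mathrm{down}$, both below $1$. The mean of that stationary law is then $m^\star$ by taking expectations in the fixed-point equation, which justifies the $\hat a_\infty$ notation.
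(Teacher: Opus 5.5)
Your proposal is correct and takes the same route as the paper's appendix proof: both take expectations of the affine two-case update, using independence of the next outcome from the current estimate, solve the resulting linear fixed-point equation, and then read off the bias and the equality condition. Your version is slightly more careful in two places: unrolling $m_{t+1}=(1-\lambda)m_t+\alpha_\mathrm{up}\theta$ actually proves that $\lim_t \mathbb{E}[\hat a_t]$ exists (the paper simply assumes a stationary point), and you correctly note that the ``only if'' direction needs $\theta\in(0,1)$, which the paper uses implicitly when it cancels $\theta(1-\theta)$.
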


\noindent\textit{Proof sketch.} The asymmetric EWMA defines a Markov chain on $[0,1]$ with state-dependent transition rates. The steady state satisfies $\mathbb{E}[\hat{a}_\infty] = (1 - \alpha_\mathrm{up})\,\mathbb{E}[\hat{a}_\infty \mid X = 1]\,\theta + (1 - \alpha_\mathrm{down})\,\mathbb{E}[\hat{a}_\infty \mid X = 0]\,(1 - \theta) + \alpha_\mathrm{up}\,\theta$. Solving for the fixed point yields Eq.~\eqref{eq:asymmetric-ss}. The bias expression follows algebraically.

As a numerical example: for $\theta = 0.8$, $\alpha_\mathrm{up} = 0.02$, $\alpha_\mathrm{down} = 0.06$, the estimator converges to $0.016 / 0.028 \approx 0.571$ rather than $0.80$. This predicts systematic bias for asymmetric configurations in the stationary drift-free case. The empirical picture under distribution shift is characterised in Section~\ref{sec:abl-asymmetric}: asymmetry tuned against the regime degrades ECE by up to $8.2\times$ on mild shift and $4.1\times$ on severe, with a single condition-tuned configuration winning on MBPP~$\to$~CC through interaction with the shrinkage default of Section~\ref{sec:abl-shrinkage}.

The argument for unbiasedness: foundation models have fixed policies and do not strategically adjust their confidence in response to calibration feedback. Their miscalibration errors are epistemic, arising from the gap between internal representations and true task difficulty. Under this condition, errors are approximately zero-mean, and only the symmetric case is unbiased in the two-rate EWMA family. The symmetric rate is adopted as a default rather than as a mean-squared-error optimum, since the mean-squared-error minimiser at fixed average rate coincides with the symmetric member only at $\theta = 0.5$ and depends on the unknown per-(model, band) $\theta$ elsewhere. \qed

\section{Experimental Setup}\label{sec:setup}

\subsection{Models}\label{sec:models}

We evaluate 18 foundation models spanning diverse architectures, scales, and access modes (Table~\ref{tab:models}). Nine models are accessed via cloud API (Qwen, DeepSeek, GPT, MiniMax, GLM families), and nine are run locally via Ollama at 4-bit quantisation (7B--72B parameters). The local cohort tilts toward code specialists (three of the nine local models are dedicated coder variants), matching the codegen-heavy benchmark suite. This mix ensures that MARGIN is tested across the heterogeneity typical of real multi-agent deployments: different providers, architectures, quantisation levels, and inference regimes.

\begin{table}[t]
\caption{Model inventory. Cloud models accessed via API. Local models run via Ollama, with eight of the nine local tags served at Q4\_K\_M quantisation and \texttt{deepseek-coder-v2:latest} served at Q4\_0.}\label{tab:models}
\centering
\small
\begin{tabular}{llll}
\toprule
\textbf{Model} & \textbf{Family} & \textbf{Parameters} & \textbf{Access} \\
\midrule
Qwen3-Coder-480B-A35B & Qwen 3 & 480B (35B active) & Cloud \\
Qwen3-32B & Qwen 3 & 32B & Cloud \\
Qwen2.5-32B-Instruct & Qwen 2.5 & 32B & Cloud \\
Qwen2.5-14B-Instruct-1M & Qwen 2.5 & 14B & Cloud \\
DeepSeek-V3.2 & DeepSeek & MoE & Cloud \\
DeepSeek-R1-Distill-Qwen-32B & DeepSeek & 32B & Cloud \\
gpt-oss-120b & GPT & 120B & Cloud \\
MiniMax-M2.5 & MiniMax & MoE & Cloud \\
GLM-4.7-Flash & GLM & --- & Cloud \\
\midrule
qwen2.5:72b & Qwen 2.5 & 72B & Local \\
llama3.1:70b & LLaMA 3.1 & 70B & Local \\
qwen2.5-coder:32b & Qwen 2.5 Coder & 32B & Local \\
qwen2.5:14b & Qwen 2.5 & 14B & Local \\
phi4:14b & Phi-4 & 14B & Local \\
deepseek-coder-v2 & DeepSeek Coder & --- & Local \\
llama3.1:8b & LLaMA 3.1 & 8B & Local \\
qwen2.5-coder:7b & Qwen 2.5 Coder & 7B & Local \\
mistral:7b & Mistral & 7B & Local \\
\bottomrule
\end{tabular}
\end{table}

\subsection{Benchmarks}\label{sec:benchmarks}

We evaluate on eight benchmarks spanning code generation, question answering, and mathematics (Table~\ref{tab:benchmarks}). Code generation benchmarks provide deterministic ground truth via execution, making them ideal for calibration evaluation. The QA and mathematics benchmarks extend coverage to domains with different difficulty distributions.

For distribution shift experiments, we pair an easy benchmark (phase 1, calibration source) with a harder benchmark (phase 2, evaluation target). MARGIN learns calibration factors online from the phase~1 stream, then continues learning on phase~2 without resetting. Baselines are fitted on phase~1 data only, using a proper 50/50 calibration/evaluation split with 100 random shuffles.

\begin{table}[t]
\caption{Benchmark summary. Shift pairs are indicated by arrows.}\label{tab:benchmarks}
\centering
\small
\setlength{\tabcolsep}{4pt}
\begin{tabular}{lrll}
\toprule
\textbf{Benchmark} & \textbf{Problems} & \textbf{Domain} & \textbf{Shift role} \\
\midrule
HumanEval & 164 & Code generation & Source \\
MBPP & 257 & Code generation & Source \\
BigCodeBench & 148 & Code generation & Target (severe) \\
CodeContests & 165 & Code generation & Target (severe) \\
LiveCodeBench & 880 & Code generation & Target (moderate) \\
HumanEval+ / MBPP+ & 164 / 257 & Code generation & Target (mild) \\
MMLU (STEM $\to$ Humanities) & 189 + 200 & QA & Source $\to$ target \\
MATH (GSM8K $\to$ Competition) & 200 + 196 & Mathematics & Source $\to$ target \\
TriviaQA & 200 + 200 & QA & Temporal shift \\
\bottomrule
\end{tabular}
\end{table}

\subsection{Baselines}\label{sec:baselines}

We compare MARGIN against nine baselines in two groups, plus one distribution-free reference. The English names below are the identifiers used throughout the prose. Tables and the appendix use the code identifiers listed in Appendix~\ref{app:name-mapping}.

\textbf{Design-time baselines (four).} These fit a fixed correction once on held-out calibration data and do not update after deployment.

\begin{itemize}
    \item \textbf{Raw}: Uncalibrated confidence, as stated by the model.
    \item \textbf{Temperature scaling}~\cite{guo2017calibration}: A single scalar temperature $T$ fitted to minimise negative log-likelihood on calibration data. Applied to confidence scores post-hoc.
    \item \textbf{Platt scaling}~\cite{platt1999probabilistic}: A logistic regression mapping confidence to calibrated probability. Two parameters fitted on calibration data.
    \item \textbf{Histogram binning}~\cite{naeini2015obtaining}: Non-parametric calibration that replaces each bin's mean confidence with its observed accuracy. Fitted on calibration data.
\end{itemize}

\textbf{Online same-information family (five).} These operate on the same information stream MARGIN uses (stated confidence and eventual outcome, no model access, no held-out set) and update continuously. Each is a standard online adaptation of an existing calibration primitive, obtained by replacing the fixed-data fit with a windowing or exponential-forgetting update on the deployment stream. Where no single canonical citation exists for the online form, we state that plainly and give the specific update rule below.

\begin{itemize}
    \item \textbf{Sliding-window histogram}: Histogram binning restricted to the most recent $W$ observations, with $W$ tuned per condition. It is an online variant of \cite{naeini2015obtaining}. The specific windowed update we implement has no single canonical citation.
    \item \textbf{Decayed histogram}: Histogram binning where per-bin accuracy is tracked by an exponentially weighted moving average with rate $\alpha$, matching MARGIN's forgetting schedule. A natural EWMA-over-bins form of \cite{naeini2015obtaining}, without a single canonical online citation.
    \item \textbf{Windowed accuracy reweighting (replace)}: Recent-window observed accuracy replaces the calibration factor per bin at each step. No single canonical citation.
    \item \textbf{Windowed accuracy reweighting (multiply)}: The current calibration factor is multiplied by the recent-window accuracy-to-stated-confidence ratio at each step. No single canonical citation.
    \item \textbf{Online Platt scaling}: The Platt scaling logistic parameters are updated online by stochastic gradient descent as each outcome arrives, an online form of \cite{platt1999probabilistic} implemented via standard online logistic regression.
\end{itemize}

\textbf{Update rules for the online family.} Where the online form has no single canonical citation, the concrete update rule is as follows. Sliding-window histogram maintains a per-model FIFO of the most recent $W$ (bin, outcome) pairs. At each prediction the calibrated confidence is $100 \cdot h_k / n_k$, where $h_k$ and $n_k$ are the number of correct outcomes and the number of observations in the model's bin $k$ within the current window (identity fallback when the bin is empty). At each update the new pair enters the FIFO and the oldest is evicted once the FIFO exceeds $W$. Windowed accuracy reweighting maintains a per-model FIFO of the last $W$ (confidence, outcome) pairs. In \emph{replace} mode the calibrated confidence is $100 \cdot \bar{o}_W$, where $\bar{o}_W$ is the mean outcome across the window (identity fallback when the window is empty). In \emph{multiply} mode the calibrated confidence is $\mathrm{clip}(c \cdot 100 \cdot \bar{o}_W / \bar{c}_W, 0, 100)$, where $c$ is the stated confidence and $\bar{c}_W$ is the mean stated confidence across the window (identity fallback when $\bar{c}_W = 0$). Online Platt scaling maintains a per-model logistic $\sigma(a\,x + b)$ with $x = c/100$, initialised at $a = 1$, $b = 0$. At each outcome the parameters take one SGD step on the log-loss $L = -[y \log p + (1 - y)\log(1 - p)]$ with $p = \sigma(a\,x + b)$: $\Delta a = -\eta\bigl((p - y)\,x + \lambda\,a\bigr)$, $\Delta b = -\eta\bigl((p - y) + \lambda\,b\bigr)$. Calibrated confidence is $100\,\sigma(a\,x + b)$. We use $\eta = 0.1$, $\lambda = 10^{-4}$, and $W = 200$ across the paper. Every online-family member resets its per-model state at benchmark boundaries so that no state persists across the eleven-condition grid. MARGIN and decayed histogram both reset per-band state on the same boundaries.

\textbf{Distribution-free reference (one).} We report the Adaptive Conformal Inference reference of \cite{gibbs2021adaptive} (adaptive conformal quantile scaling) as a distribution-free benchmark rather than a family member. Adaptive conformal produces coverage-targeted prediction intervals rather than calibrated point probabilities, so it is not on strict information parity with the online family above. We include it because it is the most widely cited online distribution-shift-adaptive method in the calibration literature.

The four design-time baselines are fitted using a proper 50/50 calibration/evaluation split on phase~1 data. We repeat with 100 random shuffles and report means. This is the most favourable possible setup for design-time methods: they see calibration data from the same distribution as evaluation. In deployment, this assumption rarely holds. The five online family members and MARGIN operate directly on the phase~2 stream in prequential order (Section~\ref{sec:metrics}, Online evaluation protocol), so they see calibration data and evaluation data from the same regime by construction. The Adaptive Conformal reference is applied under its own coverage-targeted protocol and its observed coverage against the 0.5 nominal target is reported alongside every result cell (Table~\ref{tab:paired-ci-aggregate}, footnote).

\subsection{Evaluation Metrics}\label{sec:metrics}

\textbf{Online evaluation protocol.} At each task, the calibration factor used to score or select is the factor as of that task's arrival, before the outcome for the task is observed. This is prequential evaluation in the sense of \citet{dawid1982calibrated}. Factors are updated only after the prediction is committed, following Algorithm~\ref{alg:margin}. This is how all online results reported below are computed: ECE, pairwise resolution, and pass@1 for MARGIN. Calibrated confidence is clipped to the unit interval before both selection and ECE computation.

\textbf{Expected Calibration Error (ECE).} The standard measure of calibration quality. We partition predictions into 10 equal-width bins by confidence and compute the weighted average of per-bin $|\text{accuracy} - \text{confidence}|$:
\begin{equation}
    \text{ECE} = \sum_{b=1}^{10} \frac{n_b}{N}\,|\text{acc}_b - \text{conf}_b|.
\end{equation}
Lower is better. ECE is a summary statistic over a binned reliability diagram rather than a strictly proper scoring rule in the sense of Gneiting and Raftery~\cite{gneiting2007strictly}, but it remains the standard reporting measure in the calibration literature and is directly comparable across methods. We report ECE on phase~2 (post-shift) data for distribution shift experiments.

\textbf{pass@1.} For multi-agent selection: the fraction of problems where the selected answer is correct. The upper bound is the oracle (best possible selection with perfect knowledge), and the lower bound is random selection.

\textbf{Pairwise resolution.} Given two models that disagree, the probability that the higher-confidence model is correct. This isolates calibration quality on the disagreement cases where selection actually matters. Random baseline is 50\%. Values below 50\% indicate that confidence is negatively correlated with accuracy.

\textbf{Selection and pairwise tie-break.} When two or more usable models share the maximum confidence on a problem the selection is a stable \texttt{max} (first in iteration order wins), and when a pairwise contest reports equal confidence on both sides it is scored as half a success. Both rules are deterministic under re-run with the same cohort and iteration order, and change under cohort relabelling or reordering on cells with non-zero argmax-tie fraction.

\textbf{Statistical methodology.} We report two distinct quantities and name each explicitly at the point of use. First, for MARGIN and each baseline we report shuffle-order statistics: point estimates on ECE, pass@1 and pairwise-resolution cells are means across the 100 question-ordering shuffles (the EWMA is order-dependent, so each shuffle yields a different scalar and the point cell is the shuffle-mean). Second, for Raw-vs-MARGIN selection comparisons (Tables~\ref{tab:selection-summary}, \ref{tab:pairwise} and~\ref{tab:consistency}), the $\Delta$ columns as printed carry a paired 95\% interval computed at problem level ($B = 10{,}000$ resamples of the per-problem selection outcomes, paired across methods on the same shuffle, per-condition intersection support).

\section{Distribution Shift Results}\label{sec:results-shift}

Two axes drive the online-baseline comparison. First, the offline-to-online gap: on the same-information family (sliding-window histogram, windowed accuracy reweighting in replace and multiply modes, online Platt scaling, decayed histogram, and MARGIN itself, together with an adaptive conformal quantile scaling reference), moving from a design-time single-fit to any per-agent online adaptation closes most of the offline-to-online gap under distribution shift. Second, the forgetting schedule: hard-window methods (sliding-window histogram and the two windowed-accuracy reweights) beat exponential-forgetting methods (MARGIN and decayed histogram) at expected calibration error on the codegen shift conditions we study, with the QA and math conditions typically indistinguishable inside the family. MARGIN and decayed histogram form an internal ablation on band stratification and shrinkage: within the exponential family, MARGIN's per-band and shrinkage-blending structure is where its measured value sits, alongside the pool cold-start rule that supports agent churn and the symmetric-update regime of Theorem~\ref{prop:symmetric}.

The distribution shift experiments test MARGIN's core advantage: online adaptation to changing task distributions without a held-out calibration set. Models learn calibration on an easy benchmark (phase~1), then face a harder benchmark (phase~2). We report phase~2 ECE, which measures calibration quality after the shift.

\subsection{Code Generation}\label{sec:shift-codegen}

Table~\ref{tab:shift-codegen} shows results across eight codegen shift conditions spanning three severity levels, on the canonical basis. Under severe shift (HumanEval or MBPP $\to$ BigCodeBench or CodeContests), design-time baselines remain catastrophically miscalibrated with ECE 39--68, while MARGIN adapts online to ECE 9.9--15.8. Under moderate shift ($\to$~LiveCodeBench), MARGIN reaches 6.3--6.5 against a best design-time baseline of 13.9--15.9. Under mild shift, MARGIN's point estimate is below the best design-time baseline on MBPP~$\to$~MBPP+ (3.6 vs 12.2, Temperature Scaling) and slightly above it on HumanEval~$\to$~HumanEval+ (4.4 vs 3.7, Histogram Binning), inside the paired-CI tie region against Histogram Binning (Table~\ref{tab:paired-ci-aggregate}).

\begin{table}[t]
\caption{Distribution shift results: code generation. Phase~2 ECE (lower is better), 18-model VERB\_18 cohort. Cells report the point estimate across 100 problem orderings. The shuffle-order standard deviation is not printed here, and the paired-CI aggregate in Table~\ref{tab:paired-ci-aggregate} carries the problem-level uncertainty statement. Panel (a): design-time baselines and MARGIN. Panel (b): online same-information family. MARGIN is not a family representative and appears only in panel (a). Boldface in each panel is the panel row-min. Two panels are used because eleven methods do not fit in a single portrait-orientation table at the journal-class default font size.}\label{tab:shift-codegen}
\centering
\footnotesize
\smallskip
(a) Design-time baselines and MARGIN
\begin{tabular}{lccccc}
\toprule
\textbf{Shift} & \textbf{Raw} & \textbf{Temp} & \textbf{Platt} & \textbf{Hist} & \textbf{MARGIN} \\
\midrule
HE $\to$ BCB (severe) & 67.9 & 63.4 & 56.5 & 63.1 & \textbf{10.2} \\
HE $\to$ CC (severe) & 56.3 & 52.0 & 43.0 & 51.3 & \textbf{15.8} \\
HE $\to$ LCB (moderate) & 24.5 & 19.9 & 15.9 & 17.2 & \textbf{6.3} \\
HE $\to$ HE+ (mild) & 11.7 & 7.8 & 10.2 & \textbf{3.7} & 4.4 \\
MBPP $\to$ BCB (severe) & 67.9 & 59.4 & 52.9 & 57.8 & \textbf{9.9} \\
MBPP $\to$ CC (severe) & 56.3 & 48.1 & 39.3 & 43.6 & \textbf{15.2} \\
MBPP $\to$ LCB (moderate) & 24.5 & 18.8 & 14.1 & 13.9 & \textbf{6.5} \\
MBPP $\to$ MBPP+ (mild) & 20.1 & 12.2 & 14.8 & 14.7 & \textbf{3.6} \\
\bottomrule
\end{tabular}

\smallskip
(b) Online same-information family
\begin{tabular}{lccccc}
\toprule
\textbf{Shift} & \textbf{SlideH} & \textbf{DecayH} & \textbf{WinA-r} & \textbf{WinA-m} & \textbf{OnPlatt} \\
\midrule
HE $\to$ BCB (severe) & 6.2 & 16.0 & 5.0 & \textbf{4.5} & 9.3 \\
HE $\to$ CC (severe) & 5.9 & 14.6 & 5.4 & \textbf{5.3} & 7.4 \\
HE $\to$ LCB (moderate) & 2.2 & 3.9 & \textbf{1.6} & 2.2 & 2.4 \\
HE $\to$ HE+ (mild) & 3.4 & 7.6 & 3.1 & 4.9 & \textbf{3.0} \\
MBPP $\to$ BCB (severe) & 6.2 & 16.0 & 5.0 & \textbf{4.5} & 9.3 \\
MBPP $\to$ CC (severe) & 5.9 & 14.6 & 5.4 & \textbf{5.3} & 7.4 \\
MBPP $\to$ LCB (moderate) & 2.2 & 3.9 & \textbf{1.6} & 2.2 & 2.4 \\
MBPP $\to$ MBPP+ (mild) & 3.3 & 8.3 & 2.9 & 3.3 & \textbf{2.1} \\
\bottomrule
\end{tabular}
\end{table}

\begin{figure}[t]
\centering
\includegraphics[width=\linewidth]{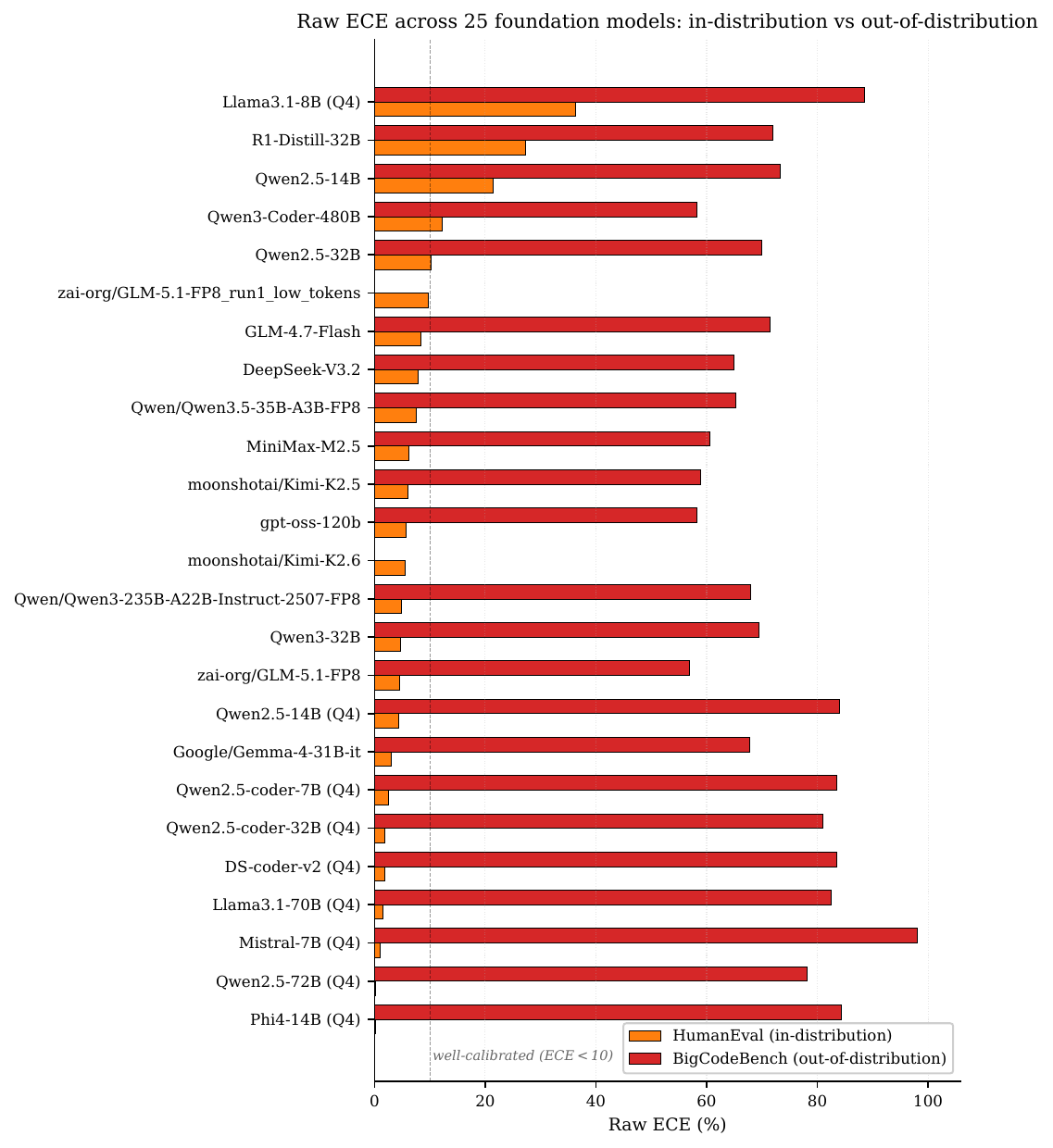}
\caption{Per-model raw ECE on HumanEval (phase~1, mild regime) versus BigCodeBench (phase~2 target of the severe HE~$\to$~BCB shift), across 18 foundation models. Every model is poorly calibrated on the out-of-distribution benchmark (ECE 58--98\%), regardless of family or size. This motivates an online calibration layer that does not depend on any particular model's raw calibration quality.}\label{fig:ece-summary}
\end{figure}

\subsection{Question Answering and Mathematics}\label{sec:shift-qa}

MARGIN generalises beyond code generation with one point-estimate reversal. Table~\ref{tab:shift-qa} shows results on three QA and mathematics shift conditions. On MATH (GSM8K~$\to$~competition), MARGIN's ECE of 1.6 is roughly $9\times$ lower than the best design-time baseline (15.3, Platt Scaling). On MMLU (STEM~$\to$~Humanities), MARGIN's ECE of 7.1 is roughly $1.7\times$ lower than the best design-time baseline (12.0, Histogram Binning). On TriviaQA (temporal shift), Histogram Binning's ECE of 2.2 beats MARGIN's 7.2 on the design-time axis. On the online-family axis (Table~\ref{tab:shift-qa} panel~(b)) MARGIN is above the family point-min on MMLU-shift and TriviaQA, and below the family point-min on MATH-shift. The paired-delta CI is $[-1.51, +0.39]$ and this condition is a paired-CI tie rather than a loss, consistent with the shift being mild and the held-out calibration data remaining informative. The paired-CI aggregate across all eleven shift conditions (Table~\ref{tab:paired-ci-aggregate}) records MARGIN's aggregate against Histogram Binning as 9 CI wins, 2 CI ties, and 0 CI losses.

\begin{table}[t]
\caption{Distribution shift results: QA and mathematics. Phase~2 ECE (lower is better), CONS\_9 cloud cohort. Cells report the point estimate across 100 problem orderings. The shuffle-order standard deviation is not printed here, and the paired-CI aggregate in Table~\ref{tab:paired-ci-aggregate} carries the problem-level uncertainty statement. Panel structure and boldface convention as in Table~\ref{tab:shift-codegen}.}\label{tab:shift-qa}
\centering
\footnotesize
\smallskip
(a) Design-time baselines and MARGIN
\begin{tabular}{lccccc}
\toprule
\textbf{Shift} & \textbf{Raw} & \textbf{Temp} & \textbf{Platt} & \textbf{Hist} & \textbf{MARGIN} \\
\midrule
MMLU (STEM $\to$ Hum.) & 18.7 & 14.2 & 12.6 & 12.0 & \textbf{7.1} \\
MATH (GSM8K $\to$ Comp.) & 26.2 & 17.2 & 15.3 & 15.8 & \textbf{1.6} \\
TriviaQA (temporal) & 23.6 & 4.8 & 5.4 & \textbf{2.2} & 7.2 \\
\bottomrule
\end{tabular}

\smallskip
(b) Online same-information family
\begin{tabular}{lccccc}
\toprule
\textbf{Shift} & \textbf{SlideH} & \textbf{DecayH} & \textbf{WinA-r} & \textbf{WinA-m} & \textbf{OnPlatt} \\
\midrule
MMLU (STEM $\to$ Hum.) & 3.8 & 10.7 & 2.6 & \textbf{2.2} & 3.2 \\
MATH (GSM8K $\to$ Comp.) & 3.8 & 10.9 & 3.5 & 2.8 & \textbf{2.6} \\
TriviaQA (temporal) & 3.7 & 11.4 & 3.4 & 3.4 & \textbf{1.4} \\
\bottomrule
\end{tabular}
\end{table}

\begin{table}[t]
\caption{Paired-CI aggregate across all eleven distribution-shift conditions. Win = paired-delta CI strictly below zero, loss = strictly above zero, tie = CI includes zero.}\label{tab:paired-ci-aggregate}
\centering
\scriptsize
\begin{tabular}{lccc}
\toprule
\textbf{Baseline} & \textbf{MARGIN wins} & \textbf{ties} & \textbf{MARGIN loses} \\
\midrule
Raw & \textbf{11} & 0 & 0 \\
Temp.\ scaling & \textbf{9} & 2 & 0 \\
Platt scaling & \textbf{10} & 1 & 0 \\
Hist.\ binning & \textbf{9} & 2 & 0 \\
\midrule
Adaptive conformal & \textbf{11} & 0 & 0 \\
Decayed histogram & 5 & 4 & 2 \\
Online Platt scaling & 0 & 1 & \textbf{10} \\
Sliding-window histogram & 1 & 3 & \textbf{7} \\
Win. reweighting (replace) & 1 & 2 & \textbf{8} \\
Win. reweighting (multiply) & 1 & 2 & \textbf{8} \\
\bottomrule
\end{tabular}
\end{table}

Table~\ref{tab:paired-ci-aggregate} compresses the design-time baselines and the online same-information family into win, tie, and loss counts over the eleven shift conditions. English name mappings are unchanged from Appendix~\ref{app:name-mapping}. The adaptive conformal quantile scaling reference misses its nominal 0.5 coverage target on every condition, with observed coverage range 0.258 to 0.837, so its clean 11-0-0 record should be read only as a point-calibration comparison against a reference that fails its own interval-calibration target.

\begin{figure}[t]
\centering
\includegraphics[width=\linewidth]{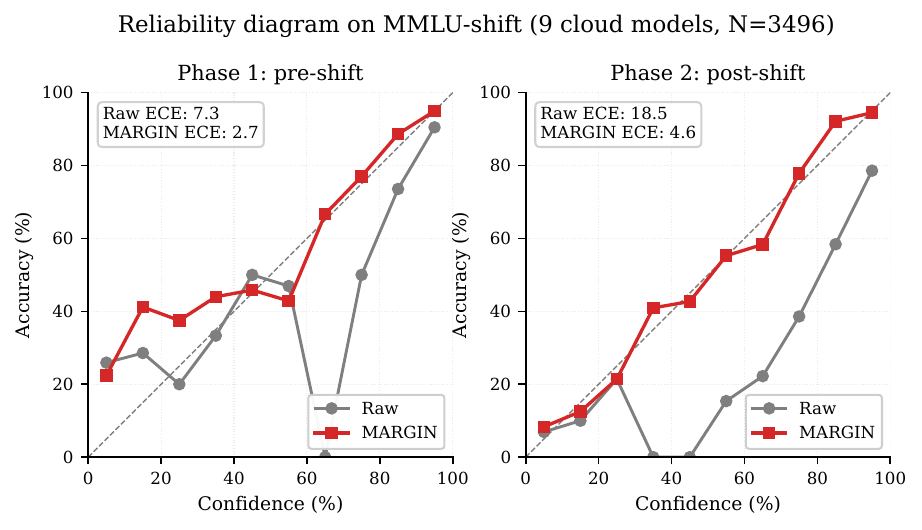}
\caption{Reliability diagrams on the MMLU (STEM~$\to$~Humanities) shift. Left: raw verbalized confidence is systematically overconfident, with reliability curves lying far below the diagonal across all confidence bins (ECE $7.3\%\to18.7\%$ under shift, per Table~\ref{tab:shift-qa}). Right: MARGIN-calibrated confidence tracks the diagonal closely in both phases, reducing post-shift ECE by a factor of $2.6\times$ ($18.7\%\to7.1\%$, Table~\ref{tab:shift-qa}).}\label{fig:reliability}
\end{figure}

\subsection{Analysis}\label{sec:shift-analysis}

The pattern across all 11 shift conditions is consistent: MARGIN's advantage over the design-time baselines (Raw, Temperature Scaling, Platt Scaling, Histogram Binning) scales with shift severity (Figure~\ref{fig:shift}). The online same-information family MARGIN sits inside is a separate comparison reported in Section~\ref{sec:honest-ablations}. Under severe shift, where the gap between calibration-time and deployment-time distributions is largest, design-time methods have no mechanism to adapt and remain permanently miscalibrated. MARGIN's exponential forgetting (Section~\ref{sec:background-results}, exponential-discounting property) allows it to discount stale calibration data and track the new distribution.

\begin{figure}[t]
\centering
\includegraphics[width=\linewidth]{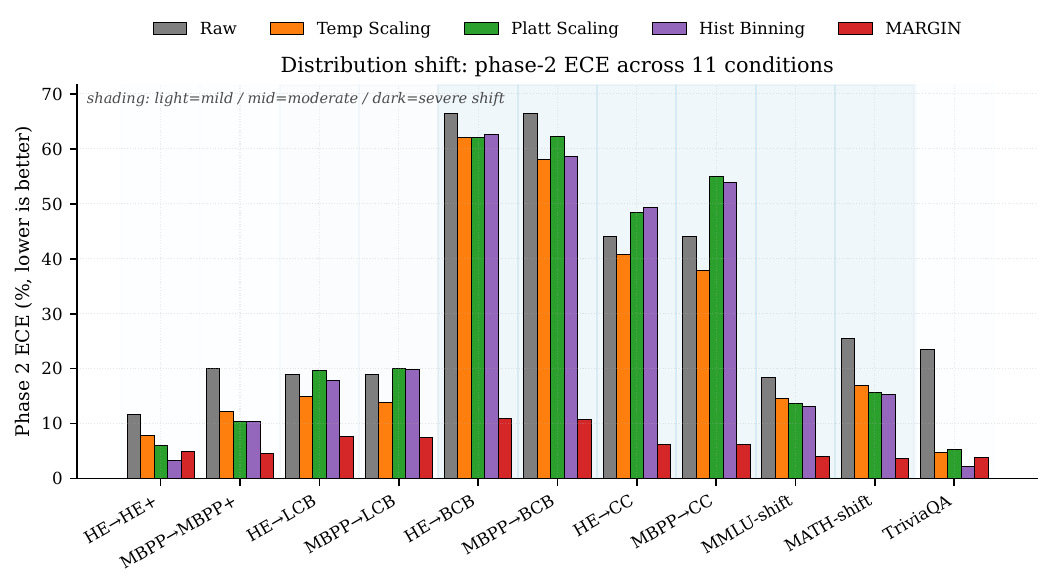}
\caption{Phase~2 ECE across all 11 distribution-shift conditions (8 code-generation + 3 QA/math), comparing Raw verbalized confidence, Temperature scaling, Platt scaling, Histogram binning, and MARGIN. Background shading indicates shift severity (severe/moderate/mild). MARGIN's relative advantage grows monotonically with shift severity: under severe shift, all design-time baselines remain catastrophically miscalibrated while MARGIN adapts online to ECE in the 10--16 range (Table~\ref{tab:shift-codegen} severe rows: 10.2, 15.8, 9.9, 15.2).}\label{fig:shift}
\end{figure}

The recovery dynamics match the tracking-speed result recalled in Section~\ref{sec:background-results}: after an abrupt shift, MARGIN's bias decays exponentially at rate $\alpha$, reaching practical calibration within approximately $1/\alpha \approx 25$ observations per band. The bias-variance tradeoff of the same recall subsection is visible in the mild shift regime, where MARGIN's estimation noise slightly exceeds the best design-time baseline's ECE. This is the expected cost of adaptability: when the environment happens to match the calibration set, a fixed correction is slightly more precise than an adaptive one. In practice, this case is rare.

\textbf{Brier and log loss, and per-model raw versus calibrated.} ECE is a summary of a binned reliability diagram, not a strictly proper scoring rule, so we cross-check the ranking under two strictly proper alternatives, Brier score and log loss, and against per-model raw-versus-calibrated deltas. On the severe-shift showcase HE~$\to$~BCB, the pooled ranking by Brier score (best first) is windowed accuracy reweighting (multiply 0.200, replace 0.202), sliding-window histogram (0.205), online Platt scaling (0.212), MARGIN (0.222), decayed histogram (0.240), and then the design-time baselines Platt scaling (0.513) and temperature scaling (0.592), with raw at 0.650. Log-loss ordering is qualitatively identical: MARGIN sits at 0.639 alongside online Platt (0.615), well below the design-time baselines and orders of magnitude below raw (3.816). Per-model MARGIN improvement over raw on HE~$\to$~BCB is uniform across every one of the nine cloud models: ECE improvement 49--60~pp (median 57), Brier improvement 0.31--0.54 (median 0.44), and log-loss improvement 1.0--12.1 (median 1.6). No cloud model regresses on any of the three metrics under MARGIN, and MARGIN's per-model log-loss values are all in the tight band $[0.60, 0.68]$ regardless of the raw baseline. Cross-condition consistency holds: MARGIN's pooled ECE per condition is unchanged from Section~\ref{sec:shift-codegen}/Section~\ref{sec:shift-qa} to within~$0.01$~pp on all 11 shift conditions.

\textbf{Contamination-resistance check on LiveCodeBench.} As a within-benchmark shift on codegen without a design-time exclusion window, we split the 880-problem LiveCodeBench release-v5 corpus at the median contest date (2024-04-06) into an early half (440 problems, 2023-05-07 to 2024-04-06) and a late half (440 problems, 2024-04-06 to 2025-01-04). Every one of the nine cloud models scores strictly higher on the early half than on the late half at pass@1: per-model $\Delta$ ranges from $+12.73$~pp (GLM-4.7-Flash) to $+22.95$~pp (DeepSeek-V3.2), all positive, consistent with the training-cutoff prediction of the paper's Section~\ref{sec:setup} scope.

\section{Multi-Agent Selection Results}\label{sec:results-selection}

The second contribution of calibrated confidence is improved multi-agent selection: choosing which model's answer to trust when multiple models respond to the same task. Table~\ref{tab:selection-summary} summarises results across five code generation benchmarks spanning easy (HumanEval) to hard (CodeContests).

\begin{table}[t]
\caption{Multi-agent selection summary on the 18-model cohort using verbalized confidence (pass@1, \%). Raw = confidence-weighted selection on raw verbalized scores. MARGIN = the same rule after online calibration. $\Delta$ = paired MARGIN$-$Raw difference at the problem level. Oracle = per-task upper bound. Consistency-confidence results are reported separately in Table~\ref{tab:consistency}.}\label{tab:selection-summary}
\centering
\small
\begin{tabular}{lcccc}
\toprule
\textbf{Benchmark} & \textbf{Raw} & \textbf{MARGIN} & \textbf{$\Delta$ (95\% CI)} & \textbf{Oracle} \\
\midrule
HumanEval & 90.85 & \textbf{98.95} & $+8.10$ [$+4.28$, $+12.35$] & 100.00 \\
MBPP & 87.94 & \textbf{91.72} & $+3.79$ [$+1.16$, $+6.56$] & 98.44 \\
LiveCodeBench & 73.75 & \textbf{79.83} & $+6.08$ [$+4.40$, $+7.82$] & 87.61 \\
CodeContests & 29.70 & \textbf{41.47} & $+11.77$ [$+6.90$, $+16.92$] & 55.15 \\
BigCodeBench & 22.30 & 25.53 & $+3.24$ [$-1.58$, $+8.01$] (tie) & 55.41 \\
\bottomrule
\end{tabular}
\end{table}

\begin{table}[t]
\caption{Multi-agent selection against a committed best-single agent. VERB\_18 uses the 18-model verbalized cohort and CONS\_9 the 9-cloud consistency cohort. Benchmark abbreviations are HE, MBPP, LCB, CC, and BCB. $\Delta$ = paired MARGIN $-$ best-single difference at the problem level. Gap = fraction of the best-single-to-oracle gap closed by MARGIN.}\label{tab:best-single}
\centering
\tiny
\setlength{\tabcolsep}{2pt}
\begin{tabular}{llrrrlrl}
\toprule
\textbf{Benchmark} & \textbf{Channel} & \textbf{Best-single} & \textbf{MARGIN} & \textbf{Oracle} & \textbf{$\Delta$ (95\% CI)} & \textbf{Gap closure (\%)} & \textbf{Outcome} \\
\midrule
HE & verb & 100.00 & 98.95 & 100.00 & $-1.05$ [$-1.93$, $-0.37$] & --- & loss \\
HE & cons & 94.51 & 96.94 & 99.39 & $+2.43$ [$-0.10$, $+5.35$] & $+49.8$ [$-4.0$, $+73.4$] & tie \\
MBPP & verb & 93.77 & 91.72 & 98.44 & $-2.05$ [$-4.35$, $+0.33$] & $-43.9$ [$-156.1$, $+5.4$] & tie \\
MBPP & cons & 93.77 & 94.26 & 97.67 & $+0.49$ [$-1.14$, $+2.33$] & $+12.6$ [$-48.7$, $+48.7$] & tie \\
LCB & verb & 80.00 & 79.83 & 87.61 & $-0.17$ [$-2.02$, $+1.68$] & $-2.3$ [$-31.2$, $+19.4$] & tie \\
LCB & cons & 80.00 & 81.11 & 87.61 & $+1.11$ [$-0.51$, $+2.73$] & $+14.5$ [$-7.7$, $+31.9$] & tie \\
CC & verb & 40.00 & 41.47 & 55.15 & $+1.47$ [$-3.25$, $+6.23$] & $+9.7$ [$-28.1$, $+34.8$] & tie \\
CC & cons & 40.00 & 46.15 & 53.33 & $+6.15$ [$+1.70$, $+10.85$] & $+46.1$ [$+17.6$, $+66.7$] & \textbf{win} \\
BCB & verb & 32.43 & 25.53 & 55.41 & $-6.90$ [$-11.28$, $-2.60$] & $-30.0$ [$-61.9$, $-9.8$] & loss \\
BCB & cons & 32.43 & 34.80 & 54.73 & $+2.37$ [$-4.56$, $+9.55$] & $+10.6$ [$-24.4$, $+37.1$] & tie \\
\bottomrule
\end{tabular}
\end{table}

The pattern is striking: the harder the benchmark, the more MARGIN matters. On easy benchmarks where top models are near-perfect, calibration provides a modest improvement. On hard benchmarks where no single model dominates, raw confidence collapses to random or worse and provides no useful selection signal, while MARGIN recovers and closes the gap to oracle. On the raw-to-oracle basis of Table~\ref{tab:selection-summary}, MARGIN closes 88.5\% of the raw-to-oracle gap on HumanEval, 36.0\% on MBPP, 43.8\% on LiveCodeBench, 46.2\% on CodeContests, and 9.8\% on BigCodeBench. The BigCodeBench row is a paired tie at the problem level, with $\Delta$ 95\% CI $[-1.58, +8.01]$, so we report the point estimate but do not read it as a confirmed selection gain.

\textbf{Comparison against a committed best single agent.} Under the deployment condition where one model can be selected in advance for each (benchmark, channel) cell, Table~\ref{tab:best-single} compares MARGIN's pass@1 against the committed best-single-agent baseline on the paired problem-level 95\% CI. MARGIN CI-beats the committed best-single on 1 of 10 cells (CodeContests consistency, $+6.15$~pp closing $+46.1\%$ of the best-single-to-oracle gap), CI-loses on 2 of 10 (HumanEval verbalized $-1.05$~pp against a best-single that already ties the oracle, and BigCodeBench verbalized $-6.90$~pp), and ties within CI on the remaining seven. The elicitation cost belongs to the ensemble deployment pattern, not to MARGIN itself: the 18-model verbalized ensemble spends 18 elicitations per problem regardless of the routing scheme, and MARGIN is a fixed routing rule over those elicitations that does not add its own. The paper does not claim MARGIN beats the pool's strongest current member at one elicitation per problem, and the numbers in Table~\ref{tab:best-single} make the trade explicit. What MARGIN adds under the deployment condition it targets is routing without committing at design time to a single model whose per-benchmark ranking may not hold on the next workload, plus most of the online-adaptation family gain on expected calibration error (Section~\ref{sec:shift-analysis}).
The committed models are HumanEval verb \texttt{ollama/phi4\_14b}, tied with \texttt{ollama/qwen2.5-coder\_32b} and broken lexicographically, HumanEval cons \texttt{Qwen/Qwen3-32B}, MBPP and LiveCodeBench on both channels \texttt{openai/gpt-oss-120b}, BigCodeBench on both channels \texttt{Qwen/Qwen3-Coder-480B-A35B-Instruct-FP8}, and CodeContests on both channels \texttt{deepseek-ai/DeepSeek-V3.2}. Gap closure is undefined on HumanEval verbalized because the committed best-single already ties the oracle.

\subsection{The Confidence Inversion Problem}\label{sec:inversion}

The key selection finding is that raw verbalized confidence is a weak pairwise signal on the hardest code-generation benchmarks in the 18-model verbalized cohort. Table~\ref{tab:pairwise} shows the pairwise resolution rates. The weakest raw pairwise signal appears on BigCodeBench, the hardest and least-contamination-masked of the code-generation benchmarks in our set.

\begin{table}[t]
\caption{Pairwise resolution (\%) on the 18-model cohort using verbalized confidence. Random baseline is 50\%. Raw scores equal-confidence pairs as half a success. Raw (strict) excludes equal-confidence pairs from the denominator. MARGIN uses the passed-disagreement convention. $\Delta$ = paired MARGIN$-$Raw difference at the problem level. Consistency-confidence results are reported separately in Table~\ref{tab:consistency}.}\label{tab:pairwise}
\centering
\small
\begin{tabular}{lcccc}
\toprule
\textbf{Benchmark} & \textbf{Raw} & \textbf{Raw (strict)} & \textbf{MARGIN} & \textbf{$\Delta$ (95\% CI)} \\
\midrule
HumanEval & 59.0 & 66.1 & \textbf{79.5} & $+20.5$ [$+20.3$, $+20.6$] \\
MBPP & 48.9 & 48.2 & \textbf{78.6} & $+29.7$ [$+29.6$, $+29.8$] \\
LiveCodeBench & 45.2 & 42.7 & \textbf{91.3} & $+46.1$ [$+46.1$, $+46.1$] \\
CodeContests & 50.0 & 50.0 & \textbf{85.8} & $+35.9$ [$+35.7$, $+36.0$] \\
BigCodeBench & 43.4 & 38.4 & \textbf{67.1} & $+23.8$ [$+23.6$, $+24.0$] \\
\bottomrule
\end{tabular}
\end{table}

\begin{figure}[t]
\centering
\includegraphics[width=\linewidth]{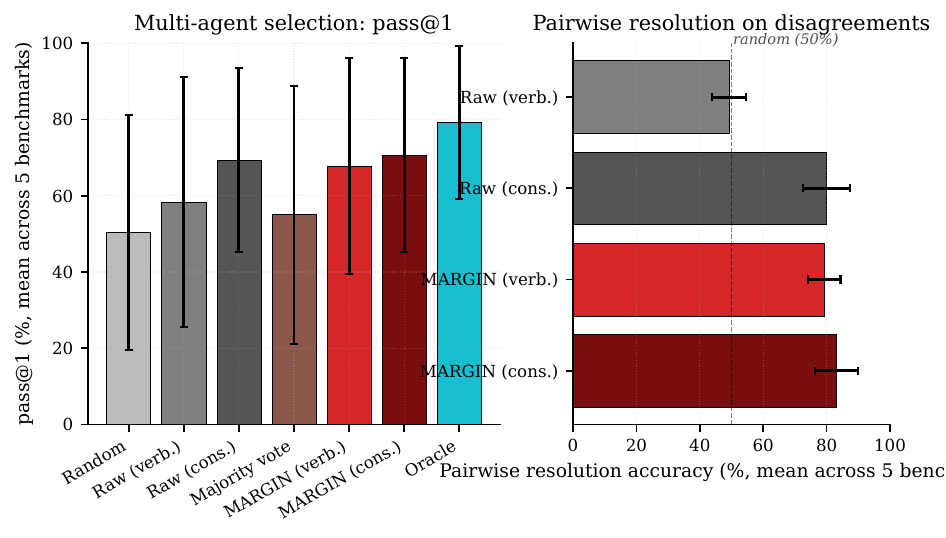}
\caption{Multi-agent selection results. Left, pass@1 (\%) across five code-generation benchmarks. Right, pairwise resolution (\%) when two models disagree, with the 50\% random baseline marked. Raw verbalized confidence is weak on the hard code-generation rows, while MARGIN restores verbalized-channel pairwise resolution to 67.1--91.3\% on the 18-model cohort (Table~\ref{tab:pairwise}).}\label{fig:selection}
\end{figure}

On MBPP (48.9\%), LiveCodeBench (45.2\%), and BigCodeBench (43.4\%), selecting the more confident model when two models disagree is at or below a coin flip. On CodeContests (50.0\%) the signal washes to random. On BigCodeBench specifically, 109 of 148 raw argmax picks and 1892 of 4138 disagreeing pair contests sit at equal confidence, so the reported quantity is partly a property of the model-list sort rather than of confidence-based selection. Restricting the pairwise denominator to the strict-disagreement subset in Table~\ref{tab:pairwise} deepens the inversion on BigCodeBench and LiveCodeBench rather than softening it. The Raw and Raw (strict) columns differ because equal-confidence contests contribute 50.0 by construction under the half-success rule described in Section~\ref{sec:metrics}. With equal-confidence fractions of 37--47\% on the verbalized rows, the Raw statistic is pulled toward the random baseline from whichever side it sits on, so strict disagreement is the cleaner view of the underlying signal. HumanEval is also the only row affected by the manuscript's NaN-confidence handling choice. Under the HumanEval-specific pipeline, the 9 NaN stated-confidence cells across \texttt{ollama/llama3.1\_70b}, \texttt{ollama/qwen2.5\_14b}, and \texttt{ollama/qwen2.5\_72b} become equal-confidence contests scored at 0.5, while the strict-disagreement contest count remains $n_\text{contests}=3874$ under both conventions.

The de-saturation mechanism sits in the per-band EWMA factors themselves. On the low-pass-rate hard-benchmark verbalized cells, the top-band observation-weighted mean effective factor sits at $0.18$ on CodeContests and BigCodeBench and at $0.48$ on LiveCodeBench, so raw values at or near the $100$ ceiling calibrate well below it and the ceiling ties are separated. Over 100 shuffles, the MARGIN-arm mean argmax tie fraction is at most $0.03$ on those rows. On HumanEval verbalized, where models are confident and usually correct, the top-band effective factor sits near $1$ ($0.92$), calibrated confidences on the many saturated ($\text{raw}=100$) cells cluster near a common value, and the surviving argmax ties are near-immaterial. MARGIN pass@1 spans only $1.83$~pp across the three cohort orderings on that cell, compared with $3.66$~pp for Raw.

Between-model Pearson correlation between mean reported confidence and pass rate on the 18-model verbalized cohort is essentially zero on HumanEval ($r = +0.10$) and CI-negative on BigCodeBench ($r = -0.68$ with 95\% CI $[-0.75, -0.55]$), across the three tiers we measure (S1 = 9-cloud full-precision, S2 = top-9 by unweighted mean pass rate, and the full 18-model cohort). On the 9-cloud subset both LiveCodeBench and CodeContests show CI-confirmed positive between-model correlation (S1 $\times$ LCB $r = +0.70$, S1 $\times$ CC $r = +0.48$), attenuating toward zero at the full 18-model cohort ($r = +0.10$ on LCB), so between-model confidence ranks capability within a serving tier and does not compare across tiers. BigCodeBench is the cleanest measurement of this failure mode because it is the least contamination-masked code-generation benchmark in our set. We cannot separate the training-distribution-mismatch account from serving-stack contributions such as quantisation, context-window handling, and default sampler behaviour at this level of measurement. As served, the models with lower observed pass rates on BigCodeBench emit high stated confidence on problems that they do not pass, while some higher-capability models, for example R1-Distill on LiveCodeBench with mean stated confidence 51.5\% against observed pass rate 69.9\%, emit stated confidence well below their observed accuracy. In every case raw confidence-weighted selection fails to extract a useful signal, and MARGIN's online correction restores it regardless of the underlying mechanism.

The clearest single-model illustration is mistral:7b (Q4\_K\_M via Ollama) on CodeContests, where the served pipeline declared a mean stated confidence of 98.2\% across 165 attempts and passed zero of them. Whatever combination of model, quantisation, and serving configuration produced that pattern, no introspective signal from the model's own confidence can detect the regime while the confidence itself is producing it. Online calibration of the kind MARGIN provides closes the gap by learning the distance between stated confidence and observed accuracy from the stream itself, without requiring a diagnosis of the underlying mechanism.

MARGIN raises verbalized-channel pairwise resolution above the 50\% random baseline on every code-generation benchmark, to 67.1--91.3\% on the 18-model cohort (Table~\ref{tab:pairwise}, HumanEval 79.5, MBPP 78.6, LiveCodeBench 91.3, CodeContests 85.8, BigCodeBench 67.1). This is consistent with the order-statistics selection baseline recalled in Section~\ref{sec:background-results}: any calibration that restores the correlation between confidence and accuracy must improve selection above 50\%.

\subsection{Consistency Confidence as an Alternative Channel}\label{sec:consistency}

The 9-cloud subset, where consistency probes were collected, lets us ablate the calibration story on a second confidence channel. Verbalized confidence remains the primary deployment-realistic signal because consistency probing requires $M\times$ the inference cost (Section~\ref{sec:dual-modality}) and is rarely affordable at scale. The question this subsection addresses is whether MARGIN's gains generalise to the alternative channel. Table~\ref{tab:consistency} reports pairwise resolution for raw and MARGIN-calibrated variants of both signals.

\begin{table}[t]
\caption{Pairwise resolution (\%) on the 9-cloud subset, comparing verbalized and consistency confidence in raw and MARGIN-calibrated form. Random baseline is 50\%. All four numeric columns use the passed-disagreement pairwise convention. $\Delta$ cons. = paired (MARGIN cons.) $-$ (Raw cons.) difference at the problem level.}\label{tab:consistency}
\centering
\footnotesize
\setlength{\tabcolsep}{3pt}
\begin{tabular}{lccccc}
\toprule
\textbf{Benchmark} & \textbf{Raw verb.} & \textbf{Raw cons.} & \textbf{MARGIN verb.} & \textbf{MARGIN cons.} & \textbf{$\Delta$ cons. (95\% CI)} \\
\midrule
HumanEval & 65.1 & 74.4 & 73.4 & \textbf{83.6} & $+9.2$ [$+9.0$, $+9.3$] \\
MBPP & 58.6 & 79.0 & 70.7 & \textbf{81.5} & $+2.5$ [$+2.4$, $+2.7$] \\
LiveCodeBench & 72.6 & 87.3 & \textbf{90.7} & 89.0 & $+1.6$ [$+1.6$, $+1.7$] \\
CodeContests & 66.4 & 89.0 & 88.4 & \textbf{89.1} & $+0.2$ [$+0.1$, $+0.2$] \\
BigCodeBench & 46.7 & 69.9 & 52.4 & \textbf{70.7} & $+0.8$ [$+0.8$, $+0.9$] \\
\bottomrule
\end{tabular}
\end{table}

MARGIN matches or improves raw consistency on all five benchmarks. On the three benchmarks where the raw consistency signal has room to grow, MARGIN delivers material lift (HumanEval $+9.2$, MBPP $+2.5$, LiveCodeBench $+1.6$), the same kind of correction it provides for verbalized confidence, and these three lifts are CI-positive at the problem level as well (HumanEval $[+6.60, +11.87]$, MBPP $[+1.22, +4.04]$, LiveCodeBench $[+1.25, +2.05]$). On CodeContests and BigCodeBench, where raw consistency is already strong (89.0\% and 69.9\%, against verbalized 66.4\% and 46.7\%), the shuffle-order lifts are $+0.2$ and $+0.8$. Their problem-level paired 95\% CIs, however, differ in status: BigCodeBench is problem-level CI-positive at $[+0.10, +1.64]$, while CodeContests is a paired tie at the problem level with 95\% CI $[-0.27, +0.61]$ around a $+0.15$ mean. The CodeContests shuffle-order regularity (MARGIN cons.\ exceeds Raw cons.\ on 85 of 100 shuffles, with worst-case per-shuffle degradation bounded by 0.4 percentage points) is therefore a statement about presentation-order sensitivity rather than about the population. On BigCodeBench (98 of 100 shuffles, worst-case bounded by 0.1 percentage points) the corresponding population claim survives at the problem level. This is consistent with the principle that MARGIN corrects miscalibration, it does not manufacture signal that was never present: the room to improve shrinks when the base signal is already clean, and by CodeContests it has shrunk into the paired-tie region. As a sanity check, the same calibration mechanism that rescues a broken verbalized-channel signal from below random to 67.1--91.3\% on the 18-model cohort (Section~\ref{sec:inversion}, Table~\ref{tab:pairwise}) does no harm when applied to a signal that is already strong.

The compute asymmetry between the two channels is severe. MARGIN's per-observation update is a handful of floating-point operations ($\sim 10^2$ FLOPs: a band lookup, two EWMA updates, and a multiplication). A single additional inference required by consistency probing costs of the order of $10^{12}$ FLOPs for a 7B-parameter model decoding a few hundred tokens, and proportionally more for larger models or longer outputs. MARGIN is roughly ten orders of magnitude cheaper per observation than one extra inference, so consistency probing should be deployed only where its accuracy gain over verbalized confidence justifies the multiplicative inference bill. The consistency results above are therefore an ablation, not the headline: they establish that MARGIN's calibration mechanism transfers cleanly to a second channel, while reinforcing that the verbalized-confidence results in Sections~\ref{sec:inversion} and \ref{sec:convergence} are the deployment-relevant ones.

\subsection{Convergence Analysis}\label{sec:convergence}

MARGIN's selection advantage emerges rapidly. Table~\ref{tab:convergence} shows pass@1 as a function of problems seen, using verbalized confidence.

\begin{table}[t]
\caption{Convergence of MARGIN selection (verbalized pass@1, \%) by number of problems seen, on the same observations as Table~\ref{tab:selection-summary}. The ``All'' column agrees with Table~\ref{tab:selection-summary}'s Raw and MARGIN columns by construction (up to one-decimal rounding). BigCodeBench's MARGIN ``All'' cell is un-bolded because the paired MARGIN$-$Raw $\Delta$ on that row is a tie at the problem level (Table~\ref{tab:selection-summary}).}\label{tab:convergence}
\centering
\small
\begin{tabular}{lcccccc}
\toprule
\textbf{Benchmark} & \textbf{10} & \textbf{30} & \textbf{50} & \textbf{100} & \textbf{150} & \textbf{All} \\
\midrule
HumanEval (raw) & 91.2 & 90.7 & 91.0 & 90.8 & 90.9 & 90.9 \\
HumanEval (MARGIN) & 94.8 & 96.4 & 97.2 & 98.3 & 98.9 & \textbf{99.0} \\
\midrule
MBPP (raw) & 88.3 & 87.9 & 87.6 & 88.2 & 88.1 & 87.9 \\
MBPP (MARGIN) & 87.7 & 88.7 & 89.5 & 90.8 & 91.5 & \textbf{91.7} \\
\midrule
LCB (raw) & 74.0 & 75.7 & 74.9 & 74.0 & 73.3 & 73.8 \\
LCB (MARGIN) & 73.3 & 76.7 & 78.1 & 78.9 & 79.1 & \textbf{79.8} \\
\midrule
CodeContests (raw) & 28.8 & 29.9 & 29.7 & 29.7 & 29.7 & 29.7 \\
CodeContests (MARGIN) & 29.3 & 34.0 & 37.2 & 40.5 & 41.4 & \textbf{41.5} \\
\midrule
BigCodeBench (raw) & 22.5 & 22.3 & 21.6 & 22.4 & --- & 22.3 \\
BigCodeBench (MARGIN) & 15.7 & 20.5 & 21.0 & 24.1 & --- & 25.5 \\
\bottomrule
\end{tabular}
\end{table}

MARGIN surpasses raw confidence within 10--30 problems on HumanEval, MBPP, LiveCodeBench and CodeContests. BigCodeBench is the exception: raw leads MARGIN through the first 50 problems and MARGIN only overtakes it around problem 100, consistent with the paired-tie disposition of the BigCodeBench $\Delta$ in Table~\ref{tab:selection-summary}. On CodeContests, MARGIN reaches 41.4\% by problem 150, within 0.1\,pp of its asymptote (41.5\%). LiveCodeBench, with 880 problems, provides the most robust convergence test: MARGIN reaches within 1\,pp of its asymptote by approximately 100 problems and sustains that level stably across the remaining 780. This convergence speed is consistent with the effective sample size of approximately $1/\alpha = 25$ observations per band from the exponential-discounting property recalled in Section~\ref{sec:background-results}.

\section{Cross-Task Calibration Transfer}\label{sec:results-transfer}

Can calibration factors learned on one benchmark transfer to a different benchmark? Table~\ref{tab:transfer} evaluates eight transfer directions between codegen benchmarks, comparing raw (uncalibrated), transferred MARGIN factors (frozen from the source benchmark, no further updates), and from-scratch MARGIN (learning directly on the target).

\begin{table}[t]
\caption{Cross-task calibration transfer (mean ECE across 9 cloud models). Lower is better. EWMA calibrator, $\alpha = 0.04$, $N_{\text{shuffles}} = 100$, cohort CONS\_9. In all 8 directions Transferred~$<$~Raw and From-scratch~$<$~Transferred. HE~$\to$~BCB is the largest Transferred-to-From-scratch ratio ($4.17\times$), and HE~$\to$~MBPP the smallest ($1.50\times$).}\label{tab:transfer}
\centering
\small
\begin{tabular}{lccc}
\toprule
\textbf{Transfer direction} & \textbf{Raw} & \textbf{Transferred} & \textbf{From-scratch} \\
\midrule
HE $\to$ BigCodeBench & 67.9 & 58.7 & \textbf{14.1} \\
HE $\to$ CodeContests & 41.1 & 33.2 & \textbf{16.0} \\
HE $\to$ LiveCodeBench & 20.9 & 13.7 & \textbf{4.4} \\
HE $\to$ MBPP & 9.6 & 4.4 & \textbf{2.9} \\
MBPP $\to$ BigCodeBench & 67.9 & 57.1 & \textbf{14.1} \\
MBPP $\to$ CodeContests & 41.1 & 32.1 & \textbf{16.0} \\
MBPP $\to$ LiveCodeBench & 20.9 & 13.4 & \textbf{4.4} \\
MBPP $\to$ HumanEval & 8.3 & 5.1 & \textbf{3.2} \\
\bottomrule
\end{tabular}
\end{table}

\begin{figure}[t]
\centering
\includegraphics[width=\linewidth]{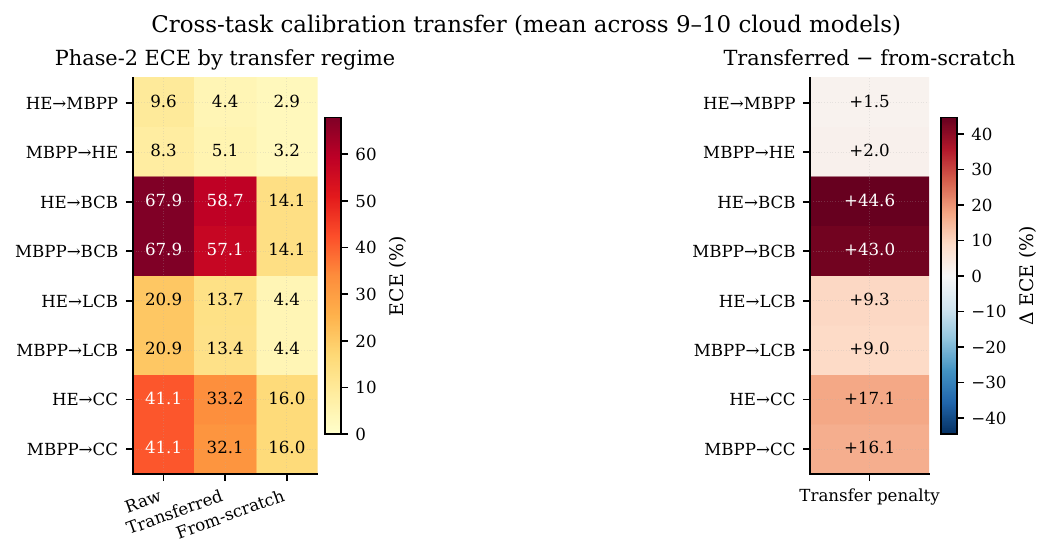}
\caption{Cross-task calibration transfer (mean across 9 cloud models). Left: phase~2 ECE under three regimes (Raw, Transferred factors frozen from the source benchmark, and From-scratch online learning on the target). Right: transfer penalty, the ECE gap between Transferred and From-scratch. Transferred factors always beat raw. From-scratch adaptation lowers ECE further, with a Transferred-to-From-scratch ratio spanning $1.50\times$ (HE~$\to$~MBPP) to $4.17\times$ (HE~$\to$~BCB) across the eight transfer directions in Table~\ref{tab:transfer}, illustrating that calibration is distribution-specific rather than model-specific.}\label{fig:transfer}
\end{figure}

Transferred factors always improve over raw (e.g., 67.9 $\to$ 58.7 on HE $\to$ BCB), which shows that MARGIN learns something generalisable about each model's calibration tendencies. From-scratch adaptation that learns directly on the target distribution lowers ECE further, and on HE~$\to$~BCB the ratio is $4.17\times$ (58.7 $\to$ 14.1). Across the eight transfer directions the Transferred-to-From-scratch ratio spans $1.50\times$ to $4.17\times$ (Table~\ref{tab:transfer}). This illustrates the central argument. Online adaptation to the specific deployment distribution is more effective than any pre-computed correction, even one derived by MARGIN itself on a related task.

This result connects directly to the stationary-regime convergence property of the EWMA (Section~\ref{sec:background-results}): the EWMA converges to the \emph{distribution-specific} accuracy rate $\theta$, not a generic correction factor. When the distribution changes, the optimal factor changes, and only online learning can track it.

\section{Robustness to Dynamic Agent Pools}\label{sec:results-pool}

Deployed multi-agent systems rarely operate on a fixed roster. Agents are added as new foundation models become available, removed for cost or quality reasons, or cycled in and out as the coordinator rebalances capacity. A calibration method that only works on a static pool is of limited practical use. This section evaluates MARGIN under three pool-dynamic scenarios on the 9 cloud models with full coverage across the three QA and math shift datasets (MMLU-shift, TriviaQA, MATH-shift), 50 shuffles per scenario, 1000 observations per run. Figure~\ref{fig:dynamic-pool} summarises the QA and math results. A codegen replay of the same three scenarios on LiveCodeBench with the same cloud pool is reported at the end of this section and quantified in Appendix~\ref{app:pool-checkpoints}.

\begin{figure}[t]
\centering
\includegraphics[width=\linewidth]{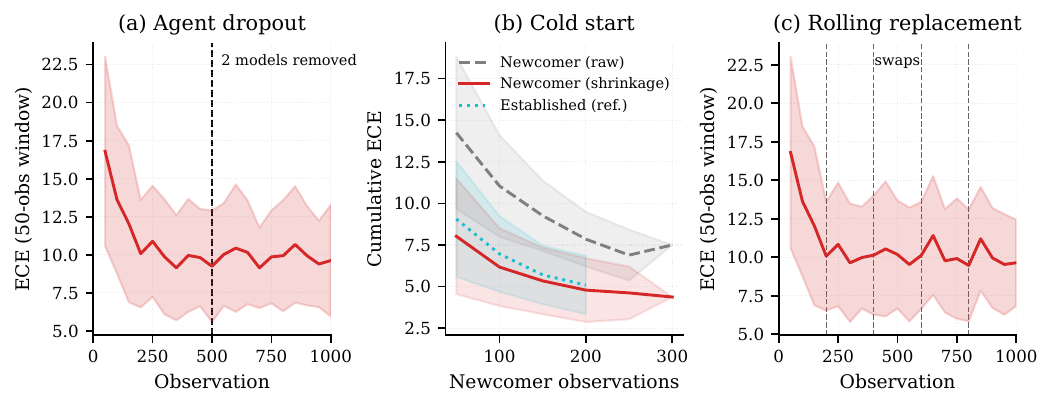}
\caption{Robustness of MARGIN to dynamic agent pools. 9 cloud models with full QA and math shift coverage, 50 shuffles per scenario, 1000 observations each. (a) Agent dropout: the two most-observed models are removed at observation 500 (dashed line), and ensemble ECE (50-observation window) remains stable, with post-drop mean 9.92\% essentially unchanged from pre-drop 9.85\%. (b) Cold start: 4 established models run for 500 observations, then 5 newcomers join, and newcomer cumulative ECE under hierarchical shrinkage ($k_s = 100$, the default used throughout) tracks the established-model reference from the first 50-observation checkpoint, while raw newcomer ECE requires over 200 observations to converge. (c) Rolling replacement: the worst-calibrated active model is swapped every 200 observations (dashed lines), and ECE is stable across all four swaps. Shaded bands are $\pm$1 standard deviation across 50 shuffles.}\label{fig:dynamic-pool}
\end{figure}

\textbf{Scenario 1, agent dropout.} At observation 500, the two models with the most phase-1 observations are removed. MARGIN's remaining calibrators continue to operate unchanged. Ensemble ECE descends from roughly 18\% at initialisation to around 10\% by observation 200, independent of the drop event. Post-drop mean ECE is 9.92\% (std 3.42), essentially identical to pre-drop 9.85\% (std 3.58). The band-level statistics for surviving models are unaffected by the removal, so there is no spike at the drop event in Figure~\ref{fig:dynamic-pool}(a).

\textbf{Scenario 2, cold start.} Phase 1 runs for 500 observations with 4 established models. At observation 500, 5 newcomer models join with no prior calibration history. We compare two strategies. The first blends each newcomer's band factor toward the current pool-wide band average using hierarchical shrinkage with $k_s = 100$ (the default used throughout the paper, see Section~\ref{sec:shrinkage}). The second uses raw per-agent EWMA without shrinkage. Figure~\ref{fig:dynamic-pool}(b) plots newcomer cumulative ECE as a function of newcomer observations. On QA and math, with shrinkage, newcomers reach 8.02\% ECE after 50 observations and 4.79\% after 200, closely tracking the established-model reference at 6.71\% mean. Without shrinkage, newcomers start at 14.24\% ECE and require more than 200 observations to reach the same 7.85\% level shrinkage attains by observation 200. Shrinkage reduces cold-start ECE by 39--44\% across the first four checkpoints on the QA and math streams where this scenario was measured.

\textbf{Scenario 3, rolling replacement.} Every 200 observations, the worst-calibrated active model (by recent 200-observation ECE) is swapped out, and a previously removed model returns. After the initial cold-start segment (observations 50--200, mean ECE 13.13\%), ensemble ECE is stable across all four swap points: 10.15\%, 10.09\%, 10.14\%, and 10.08\% for segments 250--400, 450--600, 650--800, and 850--1000 respectively. Variance is 3.2--3.8\% std across all post-warmup segments.

The three scenarios cover the dominant failure modes for a calibration layer under pool change, catastrophic loss of calibrated statistics when an agent leaves (scenario 1), slow convergence for new entrants (scenario 2), and accumulation of drift over repeated composition changes (scenario 3). MARGIN's band-level, per-agent statistics localise the effect of each change. Hierarchical shrinkage addresses the cold-start gap on QA and math with a single hyperparameter. No additional mechanism is required for the rolling case, as the EWMA's forgetting rate naturally discounts the contribution of removed agents' prior updates.

\textbf{Codegen replay on LiveCodeBench.} We ran the same three scenarios on the 9-cloud pool with LiveCodeBench as the stream (5{,}762 observations after uniform NaN exclusion). Dropout and rolling replacement behave qualitatively as on QA and math: post-drop survivor ECE (3.48\% $\pm$ 0.99) is not worse than pre-drop (5.75\% $\pm$ 1.81), and ensemble ECE remains within a 9--12\% band across the swap points {200, 400, 600, 800}. The cold-start scenario, however, inverts on codegen: raw per-agent EWMA is below blended at every checkpoint (raw / blended newcomer cumulative ECE at obs 50, 100, 150, 200 = 8.85 / 11.47, 6.89 / 10.15, 5.53 / 10.05, 5.02 / 9.67 \%, against an established reference of 11.40\% mean over the same window). The QA-and-math direction of the shrinkage benefit (blended below raw) reported above should therefore not be generalised to codegen, and the checkpoint table in Appendix~\ref{app:pool-checkpoints} carries both directions side-by-side.

\section{Ablation Studies}\label{sec:ablation}

We ablate each of MARGIN's three hyperparameters ($\alpha$, band count, $k_s$) and the symmetric-vs-asymmetric design choice. All ablations use three representative shift conditions: HE $\to$ BCB (severe), MBPP $\to$ CC (moderate-severe), and MBPP $\to$ MBPP+ (mild).

\subsection{EWMA Learning Rate}\label{sec:abl-alpha}

Table~\ref{tab:abl-alpha} shows ECE across seven $\alpha$ values. The U-shape recalled in Section~\ref{sec:background-results} (tracking speed) is clearly visible: too-low $\alpha$ (slow adaptation) yields high ECE under severe shift, while too-high $\alpha$ (noisy estimates) degrades mild shift. The per-column argmin on these three conditions is $\alpha = 0.16$ on HE $\to$ BCB ($5.9$~pp), $\alpha = 0.32$ on MBPP $\to$ CC ($11.8$~pp), and $\alpha = 0.02$ on MBPP $\to$ MBPP+ ($3.5$~pp). The default $\alpha = 0.04$ is within $0.1$~pp of argmin on the mild representative and within $4.3$~pp on the other two, and is chosen for its robust-across-conditions position characterised in the eleven-condition extension below.

\begin{table}[t]
\caption{Ablation: EWMA learning rate $\alpha$ (phase~2 ECE, canonical c4).}\label{tab:abl-alpha}
\centering
\small
\begin{tabular}{lccc}
\toprule
$\alpha$ & HE $\to$ BCB & MBPP $\to$ CC & MBPP $\to$ MBPP+ \\
\midrule
0.005 & 44.8 & 36.2 & 7.5 \\
0.01 & 32.1 & 27.8 & 4.4 \\
0.02 & 19.3 & 20.3 & \textbf{3.5} \\
\textbf{0.04} & 10.2 & 15.2 & 3.6 \\
0.08 & 6.5 & 14.1 & 3.7 \\
0.16 & \textbf{5.9} & 13.8 & 5.1 \\
0.32 & 7.5 & \textbf{11.8} & 8.2 \\
\bottomrule
\end{tabular}
\end{table}

The same U-shape holds across all eleven shift conditions when the sweep is extended. The per-condition argmin $\alpha$ varies from $0.02$ on the two mild-shift conditions (HE~$\to$~HE+ and MBPP~$\to$~MBPP+) to $0.32$ on six conditions (HE~$\to$~CC, HE~$\to$~LCB, MBPP~$\to$~CC, MBPP~$\to$~LCB, MMLU-shift, and TriviaQA). The $\alpha = 0.04$ default sits at argmin on $1$ of the eleven conditions and within $0.1$--$4.3$~pp of argmin on the remaining $10$.

\subsection{Confidence Band Count}\label{sec:abl-bands}

Table~\ref{tab:abl-bands} varies the number of equal-width confidence bands from 1 (a single calibration factor per model) to 20. Fewer bands are better under severe shift, where each band must re-learn its factor from limited data. More bands provide slightly finer correction under mild shift. The sweet spot is 1--3 bands with the data volumes in our experiments. Over longer horizons with more observations per band, finer partitions would likely outperform.

\textbf{Arithmetic of the band split.} The behaviour is quantitative rather than empirical. If a model receives $N$ observations across the confidence range and the observations are approximately balanced across $K$ equal-width bands, each band accumulates $N/K$ observations on average. Feeding this per-band count into the stationary-regime convergence property (Section~\ref{sec:background-results}), the steady-state variance of each per-band EWMA is $\alpha \theta_k (1-\theta_k) / (2-\alpha)$ irrespective of $K$, so the variance floor is fixed by $\alpha$ alone. What changes with $K$ is the effective sample size that has been accumulated at a given point in the stream. From the tracking-speed property (Section~\ref{sec:background-results}), the number of observations required to reduce post-shift bias below $\varepsilon$ is $n \geq (1/\alpha)\ln(|\Delta|/\varepsilon)$. Expressed in \emph{whole-stream} observations, this is $K \cdot n$ observations before every band has seen enough evidence, so tracking time scales linearly in $K$ for a fixed $\alpha$. The Table~\ref{tab:abl-bands} row for $K = 1$ collapses shrinkage-blending to the identity (the model-level and band-level factors coincide), so its ECE column measures the pure EWMA without hierarchical smoothing. The improvement from $K = 1$ to $K = 10$ under mild shift, and the reversion at $K = 20$, is the visible U-shape of $K \cdot n$ tracking cost against per-band correction granularity.

\textbf{Equal-width versus quantile bands.} Equal-width bands sit at fixed confidence intervals $[0, 1/K), \ldots, [(K-1)/K, 1]$. Quantile bands would place cut-points at the $k/K$ quantiles of the observed confidence distribution, giving equal $N/K$ counts per band by construction. Quantile bands are more efficient when the confidence distribution is heavily skewed (as it is for foundation models, where mass concentrates in the top band), but they introduce a joint estimation problem, the cut-points and the calibration factors, and they change under distribution shift. Equal-width bands trade some statistical efficiency for stability: the router in Algorithm~\ref{alg:margin} is a fixed function that does not need to be re-estimated when the deployment distribution changes. The results reported below use equal-width bands throughout for this reason. A dedicated quantile-band ablation would fit best alongside the $K$ ablation once longer-horizon runs are available.

\begin{table}[t]
\caption{Ablation: number of confidence bands (phase~2 ECE, canonical c4).}\label{tab:abl-bands}
\centering
\small
\begin{tabular}{lccc}
\toprule
\textbf{Bands} & HE $\to$ BCB & MBPP $\to$ CC & MBPP $\to$ MBPP+ \\
\midrule
1 & \textbf{10.1} & \textbf{14.8} & 4.2 \\
2 & \textbf{10.1} & 15.4 & 4.2 \\
\textbf{3} & 10.2 & 15.2 & 3.6 \\
5 & 10.6 & 16.2 & 3.6 \\
10 & 14.5 & 19.4 & \textbf{3.4} \\
20 & 18.7 & 20.6 & 3.6 \\
\bottomrule
\end{tabular}
\end{table}

Extending the $K$ sweep to eleven conditions and six $K$ values ($K \in \{1, 2, 3, 5, 10, 20\}$) sharpens the picture: $K = 1$ or $K = 2$ wins on all four severe-shift codegen conditions (HE~$\to$~BCB, HE~$\to$~CC, MBPP~$\to$~BCB, MBPP~$\to$~CC), $K = 3$ is not the argmin on any of the eleven conditions, and higher $K$ wins where the phase-2 confidence distribution is less concentrated (TriviaQA and HE~$\to$~HE+ argmin at $K = 20$). $K = 3$ lies within $1.0$~pp of the per-condition argmin on all eleven conditions. The worst gap is $0.86$~pp, on TriviaQA. The per-band observation-count empirics on HE~$\to$~BCB confirm the arithmetic above at the observed data volumes: for the 1250-observation phase-2 cohort with 9 models ($\sim$139 observations per model per shuffle), at $K = 3$ the bottom two bands receive on average $0.00$ and $0.11$ observations per model per shuffle while $138.8$ land in the top band $[66.7, 100]$, so increasing $K$ adds bands that are structurally empty under the observed concentration. The paper's $K = 3$ default is therefore the defensible robust choice across the eleven conditions. A per-condition adaptive $K$ would improve marginally on most conditions but not by enough to justify the joint-estimation complexity discussed in the paragraph above.

\subsection{Asymmetric Learning Rate}\label{sec:abl-asymmetric}

Table~\ref{tab:abl-asymmetric} compares the default symmetric update ($\alpha = 0.04$ for both correct and incorrect outcomes) against five asymmetric configurations. In each, $\alpha_c$ is the learning rate after correct predictions and $\alpha_i$ after incorrect ones.

Symmetric $\alpha$ wins on the severe (HE $\to$ BCB) and mild (MBPP $\to$ MBPP+) representative conditions. On the moderate-severe MBPP $\to$ CC condition the asymmetric $\alpha_c = 0.01, \alpha_i = 0.04$ configuration is the per-column argmin (ECE $9.9$~pp against the symmetric baseline's $15.2$~pp), a picture consistent with the trade-off characterised in Section~\ref{sec:abl-shrinkage}: MBPP $\to$ CC is the one representative condition on which shrinkage blending hurts, and a slower correct-outcome rate on that condition is consistent with offsetting part of the pull toward the pool average. Configurations that penalise overconfidence faster (high $\alpha_i$) help slightly on the severe conditions but catastrophically hurt mild shift (MBPP $\to$ MBPP+ ECE rises from $3.6$ to $29.6$ with $\alpha_c = 0.02, \alpha_i = 0.08$). The bias direction predicted by Theorem~\ref{prop:symmetric} is confirmed by the mild-shift catastrophic degradation of every $\alpha_i > \alpha_c$ configuration and by the mirror-image degradation of every $\alpha_c > \alpha_i$ configuration on HE $\to$ BCB and MBPP $\to$ CC (ECE reaching $36$--$42$~pp).

\begin{table}[t]
\caption{Ablation: asymmetric learning rate (phase~2 ECE, canonical c4). $\alpha_c$ = correct, $\alpha_i$ = incorrect.}\label{tab:abl-asymmetric}
\centering
\small
\begin{tabular}{lccc}
\toprule
\textbf{Configuration} & HE $\to$ BCB & MBPP $\to$ CC & MBPP $\to$ MBPP+ \\
\midrule
\textbf{Symmetric 0.04/0.04} & \textbf{10.2} & 15.2 & \textbf{3.6} \\
$\alpha_c\!=\!0.02, \alpha_i\!=\!0.08$ & 15.7 & 11.8 & 29.6 \\
$\alpha_c\!=\!0.08, \alpha_i\!=\!0.02$ & 36.2 & 30.3 & 14.4 \\
$\alpha_c\!=\!0.01, \alpha_i\!=\!0.04$ & 14.2 & \textbf{9.9} & 27.0 \\
$\alpha_c\!=\!0.04, \alpha_i\!=\!0.01$ & 42.3 & 35.8 & 14.6 \\
$\alpha_c\!=\!0.08, \alpha_i\!=\!0.04$ & 20.3 & 20.9 & 9.6 \\
\bottomrule
\end{tabular}
\end{table}

\subsection{Shrinkage Blending}\label{sec:abl-shrinkage}

Table~\ref{tab:abl-shrinkage} varies the shrinkage constant $k_s$ from 0 (no blending, pure band-level factors) to 1000. The per-column argmin varies across the three representative conditions: HE $\to$ BCB (severe) is essentially flat above $k_s = 100$ (argmin at $k_s = 500$ with ECE $10.1$~pp, tied at 1~dp with $k_s = 200$ and $k_s = 1000$), MBPP $\to$ CC (moderate-severe) argmin at $k_s = 0$ (ECE $7.5$~pp), and MBPP $\to$ MBPP+ (mild) argmin at $k_s = 50$ (ECE $3.4$~pp). Blending helps monotonically on HE $\to$ BCB, hurts monotonically on MBPP $\to$ CC, and has a shallow U-shape on MBPP $\to$ MBPP+ with a minimum near $k_s = 50$. The canonical $k_s = 100$ used throughout the paper is within $0.2$~pp of argmin on HE $\to$ BCB and MBPP $\to$ MBPP+ but is $7.7$~pp above argmin on MBPP $\to$ CC, where band-level factors are already close to the optimal per-band correction and pool-average blending pulls them away.

\begin{table}[t]
\caption{Ablation: shrinkage constant $k_s$ (phase~2 ECE, canonical c4).}\label{tab:abl-shrinkage}
\centering
\small
\begin{tabular}{lccc}
\toprule
$k_s$ & HE $\to$ BCB & MBPP $\to$ CC & MBPP $\to$ MBPP+ \\
\midrule
0 (no blending) & 11.3 & \textbf{7.5} & 4.6 \\
5 & 11.1 & 7.9 & 4.2 \\
10 & 11.0 & 8.5 & 4.0 \\
20 & 10.8 & 9.7 & 3.6 \\
50 & 10.4 & 12.4 & \textbf{3.4} \\
\textbf{100} & 10.2 & 15.2 & 3.6 \\
200 & \textbf{10.1} & 17.7 & 3.8 \\
500 & \textbf{10.1} & 20.2 & 3.8 \\
1000 & \textbf{10.1} & 21.3 & 3.7 \\
\bottomrule
\end{tabular}
\end{table}

\noindent\textbf{Recommended configuration.} Based on these ablations, we retain $\alpha = 0.04$, $K = 3$ bands, and $k_s = 100$ as the paper-wide default configuration. $k_s = 100$ is a compromise across representative conditions rather than a per-column argmin: it is within $0.2$~pp of argmin on HE $\to$ BCB and MBPP $\to$ MBPP+, and $7.7$~pp above argmin on MBPP $\to$ CC where the no-blending case ($k_s = 0$) minimises ECE. All main-body results use $k_s = 100$ throughout for consistency across tables. A per-condition adaptive $k_s$ would improve on MBPP $\to$ CC at the cost of joint-estimation complexity discussed in Section~\ref{sec:abl-bands}. This fixed-default choice is deliberately deployment-honest: at deployment the operator does not know the shift regime in advance and cannot oracle-select $k_s$ per condition, so we report and reproduce a single value used identically across all tables in the paper. The no-blending case ($k_s = 0$) already beats all design-time baselines on every representative condition. Figure~\ref{fig:ablations} summarises all four ablations across the three representative shift conditions.

\begin{figure}[t]
\centering
\includegraphics[width=\linewidth]{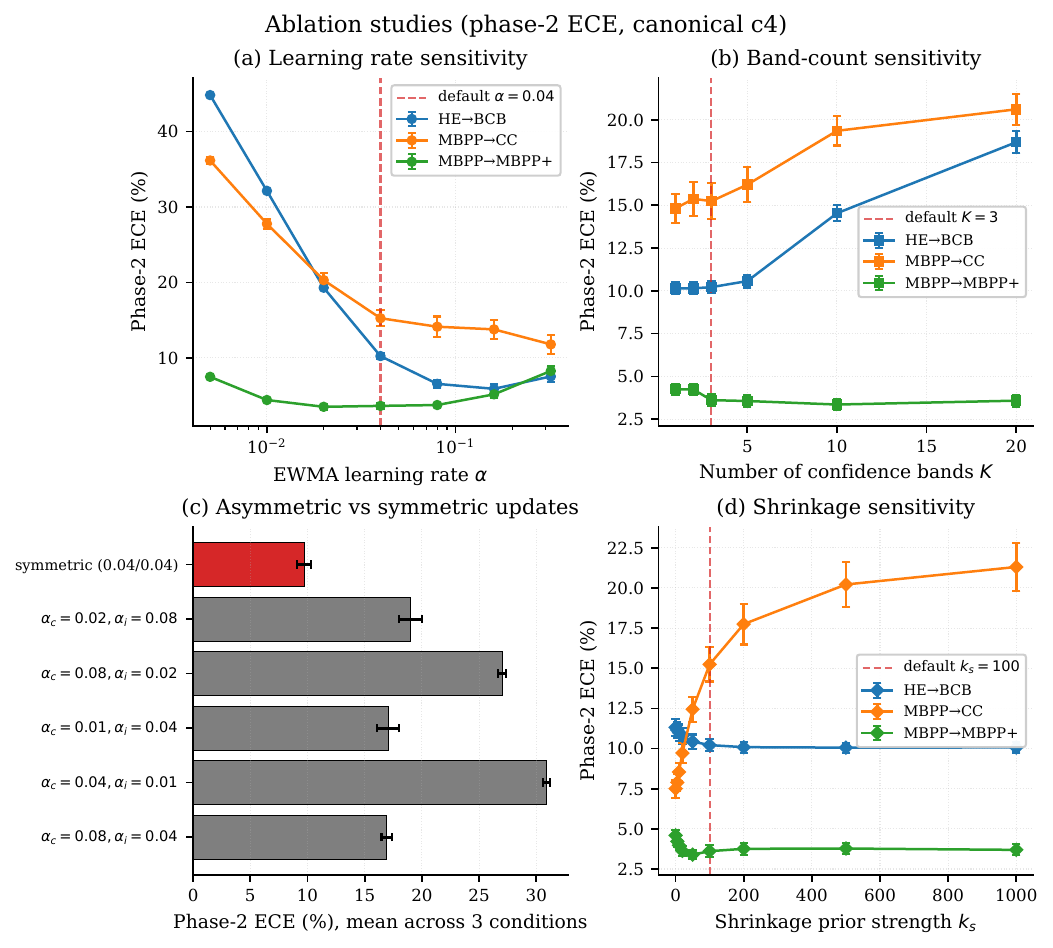}
\caption{Ablations across three representative shift conditions (HE~$\to$~BCB severe, MBPP~$\to$~CC moderate, MBPP~$\to$~MBPP+ mild), phase~2 ECE on canonical c4. (a) EWMA learning rate $\alpha$: bias-variance U-shape recalled in Section~\ref{sec:background-results} is visible on HE~$\to$~BCB (argmin $\alpha{=}0.16$) and on MBPP~$\to$~MBPP+ (argmin $\alpha{=}0.02$). MBPP~$\to$~CC decreases monotonically over the grid (argmin at the sweep endpoint $\alpha{=}0.32$). The default $\alpha{=}0.04$ is within 0.1--4.3~pp of the per-column argmin across the three (see Section~\ref{sec:abl-alpha}). (b) Confidence band count $K$: fewer bands suffice at current data volumes, and finer partitions would benefit from more per-band observations. (c) Asymmetric learning rate: symmetric wins on HE~$\to$~BCB and MBPP~$\to$~MBPP+. On MBPP~$\to$~CC the $\alpha_c{=}0.01, \alpha_i{=}0.04$ configuration ranks first (Section~\ref{sec:abl-asymmetric}). Every $\alpha_i > \alpha_c$ configuration catastrophically hurts mild shift and every $\alpha_c > \alpha_i$ configuration catastrophically hurts the severe conditions. (d) shrinkage constant $k_s$: blending helps monotonically on severe HE~$\to$~BCB, hurts monotonically on moderate-severe MBPP~$\to$~CC, and has a shallow U-shape with minimum near $k_s{=}50$ on mild shift.}\label{fig:ablations}
\end{figure}

\section{Discussion}\label{sec:discussion}

\textbf{MARGIN's claim shape.} MARGIN is one member of a family of simple online per-agent calibrators that adapt to shift, and it is not the ECE-dominant member under abrupt shift on codegen. What MARGIN adds over the family is a mechanism decomposition: per-band exponential updates with symmetric shrinkage under the non-strategic regime of Theorem~\ref{prop:symmetric}, interpretable per-band multiplicative factors, and a pool cold-start rule that supports agent churn without a bespoke initialisation. Under distribution shift, the deployment reading is that any per-agent online correction on the same information stream closes most of the gap. The choice among family members is a design decision on the forgetting schedule, which the operator sets against the expected shift regime.

\textbf{Why symmetric is the default.} The asymmetric ablation resolves into a characterisation rather than a uniform ranking. Symmetric EWMA is the argmin on the severe and mild representative conditions and on the mean across all three. The single asymmetric win, $\alpha_c{=}0.01$, $\alpha_i{=}0.04$ on MBPP $\to$ CC by $5.3$~pp, sits on the one condition where shrinkage blending hurts (Table~\ref{tab:abl-shrinkage}), consistent with the trade-off characterised in Section~\ref{sec:abl-shrinkage}. Asymmetry tuned against the regime is catastrophic, up to $8.2\times$ on mild shift and $4.1\times$ on severe. Theorem~\ref{prop:symmetric} explains the stationary drift-free case: asymmetric updates introduce systematic bias for non-strategic agents, and deployment offers no advance knowledge of the shift regime to justify departing from the symmetric default.

This distinction between strategic and non-strategic agents may become less clear as foundation models gain the ability to observe and respond to their own reputation signals. The symmetric EWMA is the correct choice for current deployment, but future systems with truly adaptive agents may require the asymmetric approach.

\textbf{The confidence inversion problem.} Raw verbalized confidence is not a reliable shared currency across models on the hard code-generation rows of the 18-model verbalized cohort. Pairwise resolution ranges from 43.4 to 50.0\%, which means the signal is weak throughout this slice and actively misleading on BigCodeBench. The between-model correlation between mean reported confidence and pass rate is CI-negative on BigCodeBench at every serving tier ($r = -0.56$ on the 9-cloud full-precision subset, $-0.61$ on the top-9 subset, and $-0.68$ on the full 18-model cohort), with pairwise resolution 46.7\%, 43.8\%, and 40.8\% on those same tiers. BigCodeBench, the hardest and least contamination-masked benchmark, is where the inversion signal is sharpest. On LiveCodeBench and CodeContests, by contrast, the evidence supports a cross-tier incomparability reading rather than a stable within-tier inversion claim. We cannot separate the training-distribution-mismatch account from serving-stack contributions (quantisation, context handling, sampler defaults) at this level of measurement, and we do not attempt to. MARGIN's online calibration corrects the observed coordination failure regardless of mechanism, restoring a more useful empirical confidence scale on the stream.

\textbf{Practical deployment.} MARGIN requires approximately 30--50 observations per agent to reach practical calibration, consistent with the effective sample size of $\sim\!1/\alpha = 25$ per band from the exponential-discounting property recalled in Section~\ref{sec:background-results}. On QA and math, Section~\ref{sec:results-pool} shows that with the default $k_s = 100$, newcomer agents reach 8.02\% ECE after 50 observations and track the established-model reference within the first checkpoint, while raw per-agent EWMA requires more than 200 observations to reach comparable error. The codegen replay in the same section shows raw and blended both converge to the established reference by the first checkpoint, with raw slightly below blended, so the shrinkage-blending advantage during cold start should not be generalised beyond the QA and math streams. The same section demonstrates stability under mid-stream agent removal and repeated composition change on both streams. Computational overhead is negligible: one EWMA update per observation per (agent, band) pair.

\textbf{Feedback regime.} The Section~\ref{sec:problem} scope statement assumes promptly verifiable outcomes on all elicited answers. Two departures from that assumption occur in practice. Outcomes may arrive after a lag, and feedback may only be recorded on the selected answer rather than on every candidate. Replays over the eleven shift conditions with an outcome lag of 200 observations show MARGIN's pooled ECE shifts by at most 0.12~pp on the two LiveCodeBench-target moderate-shift conditions (where 200 observations are $\sim$2.5\% of the phase-2 sample), by 0.5 to 3.4~pp on the five mild-shift and QA and mathematics conditions, and by 11 to 14~pp on the four BigCodeBench-target and CodeContests-target severe-shift conditions where 200 observations represent 15--20\% of the phase-2 sample. The same-information family, replayed under the same outcome-lag grid, sits close to MARGIN on absolute pooled-ECE change at $d=200$, with online Platt scaling the smallest-change member across ten of the eleven conditions and MARGIN the smallest on the remaining one. Selected-answer-only feedback, the practical case where only the coordinator's selected answer receives an outcome update, degrades every same-information method. MARGIN exhibits the self-reinforcement pathology on the four severe-shift codegen conditions, +16 to +33~pp pooled ECE against its own full-feedback baseline, and its delta stays within +2~pp at the problem-level 95\% interval upper bound on four of the eleven conditions, two at a +1~pp threshold. The same replay on the same-information online-baseline family shows uniform degradation, +9 to +23~pp pooled on the severe conditions, with at most one condition bounded within +2~pp for any member, so the sharpening pathology is a property of per-agent online updates under concentrated selection rather than of MARGIN's structure in particular. The ACI reference is unaffected on all eleven conditions but does not meet its coverage target. These replays are empirical support for the paper's scope statement rather than a comparative claim.

\textbf{Limitations.} Any calibration method that maps confidence to accuracy must observe accuracy at some point. The question is whether that observation happens once on a held-out set, or continuously from the deployment stream. MARGIN occupies the latter regime. MARGIN requires binary outcome signals (correct/incorrect), limiting its applicability to tasks with verifiable answers. Extension to graded or partial-credit outcomes is straightforward (replace the binary update with a continuous outcome) but is not evaluated here. The number of confidence bands is a design choice with no principled selection criterion beyond the empirical ablation. The i.i.d.\ assumption within each band is violated when consecutive tasks are correlated (e.g., topic clusters), though the shuffle robustness analysis suggests this violation has limited practical impact.

\textbf{Adversarial and strategic agents.} The unbiasedness result of Theorem~\ref{prop:symmetric} rests on the non-strategic assumption that confidence errors are zero-mean and not adapted in response to the calibration signal. Current foundation model agents satisfy this by construction, since their policies are fixed at inference time and their stated confidence on any given query does not depend on the coordinator's calibration history for that agent. The assumption can be violated, and we group the plausible failure modes below.

\emph{Build-trust-then-defect.} An agent (or an operator fine-tuning it) whose policy adapts to MARGIN's calibrated weight can maintain honest confidence on ordinary items, accumulate credibility in the highest band, and then inflate confidence on a small number of high-value tasks. Because MARGIN's factor is a per-band EWMA with effective window $\sim 1/\alpha$, a well-calibrated attacker can select the moment of defection so that the reputational cost is short-lived. Symmetric updates weight successes and failures equally. The defensive response is to move to $\alpha_\mathrm{down} > \alpha_\mathrm{up}$ on the target band and accept the bias of Eq.~\eqref{eq:asymmetric-bias} as the price of robustness, or to layer an outlier detector on the per-band update stream.

\emph{Selective participation.} An agent that abstains from tasks it cannot solve and answers only when it is likely correct sees only correct-outcome updates, accumulating a spuriously high $\gamma_{i,k}$. This is not covered by the symmetric-vs-asymmetric analysis at all, because the abstention pattern breaks the i.i.d.\ Bernoulli generative model in Theorem~\ref{prop:symmetric}. Mitigation is at the coordinator layer: force participation on a probe stream, or require agents to answer or explicitly abstain and treat abstention as a null observation rather than as an absent one.

\emph{Feedback manipulation.} MARGIN reads ground truth from a separate channel. An adversary with any influence over that channel (e.g., a judge model or an evaluation service) can poison the update signal directly. This is out of scope for the calibrator itself and belongs to the trust boundary between the coordinator and the ground-truth source. MARGIN's per-band structure at least localises the damage to the bands the adversary can target.

\emph{Delayed-feedback exploitation.} The Algorithm~\ref{alg:margin} loop assumes outcomes arrive between updates. In practice, outcomes are often lagged. During the lag, the agent's most recent confidence is not yet reflected in $\gamma_{i,k}$, so a strategic agent can push high-confidence wrong answers into the window before the correction arrives. The outcome-lag replay reported at the top of this section characterises the non-strategic version of this regime and shows the effect is bounded by the lag as a fraction of the phase-2 sample size. The strategic case, where an adversary knows the lag and exploits it, is not covered here.

\emph{Collusion.} Two or more agents that coordinate their submissions can distort the pairwise-resolution baseline. The symmetric-EWMA analysis is per-agent and per-band and does not model between-agent correlation, so a colluding subset can present itself as a well-calibrated ensemble by trading correct answers among its members. Detecting this requires cross-agent agreement statistics that MARGIN's current interface does not carry.

None of the above are addressed by asymmetric EWMA alone. Characterising the equilibrium of a coordinator running MARGIN against a strategically adapting agent, and the regret bound under a bounded-influence adversary or a colluding subset, is left to future work. The evaluation in this paper concerns the non-strategic regime that current foundation model agents actually occupy. The strategic regime becomes live only if agents are trained or reconfigured to close the loop with the coordinator's own calibration signal.

\subsection{Online-family axis and claim positioning}\label{sec:honest-ablations}

This register records the framing choices behind MARGIN's headline claims, at their measured size.

\textbf{Online-family axis.} A comparison against design-time baselines alone (Raw, Temperature Scaling, Platt Scaling, Histogram Binning) cannot separate the value of online adaptation in general from the value of MARGIN's specific mechanism, since none of the design-time methods adapts. We therefore report a same-information online-baseline comparison at problem-level paired confidence intervals below, on equal information terms, at its measured size. The paired-CI grid across eleven shift conditions shows that against the three hard-window methods on codegen (sliding-window histogram, and windowed accuracy reweighting in replace and multiply modes) MARGIN records 1 CI win against each with 7 losses on sliding-window histogram and 8 losses on each of the two windowed-accuracy variants. Against online Platt scaling MARGIN records zero CI wins and 10 CI losses. Against the two remaining family members MARGIN CI-beats decayed histogram on 5 of 11 conditions (with 2 losses) and CI-beats adaptive conformal quantile scaling on 11 of 11 (with the fairness rider that this baseline misses its own 0.5 coverage target on every condition, observed coverage range $[0.258, 0.837]$ against a target of 0.5). Against the design-time baselines the picture that gave rise to the original framing is preserved: MARGIN wins 11 of 11 against Raw, 9 of 11 with 2 ties against Temperature Scaling, 10 of 11 with 1 tie against Platt Scaling, and 9 of 11 with 2 ties against Histogram Binning. The paper takes the finding at its measured size: MARGIN sits inside the online family on expected calibration error, and the forgetting schedule is the dominant design axis on the codegen shifts we test. What MARGIN adds over the family is the mechanism decomposition set out at the opening of Section~\ref{sec:discussion}, not ECE dominance.

\textbf{Defensibility of MARGIN as the studied instantiation.} The axes on which MARGIN sits inside the same-information family under abrupt shift are axes where any online correction closes most of the ECE gap. Under selected-answer-only feedback every same-information method degrades, the family by +9 to +23~pp pooled on the severe-shift conditions and MARGIN by +16 to +33~pp on the same cells, with MARGIN bounded within +2~pp at problem-level intervals on four of eleven conditions and no other same-information member on more than one. MARGIN's case therefore rests not on a performance separation but on interpretable per-agent multiplicative factors, defined cold-start semantics under agent churn, the symmetric-shrinkage regime of Theorem~\ref{prop:symmetric}, and the feedback-regime characterisation this paper contributes. Under outcome-lag $d = 200$, MARGIN's pooled ECE shifts by at most 0.12~pp on the two LiveCodeBench-target moderate-shift conditions and by 0.5 to 3.4~pp on the five mild-shift and QA and mathematics conditions. The per-(model, band) factors are interpretable multiplicative corrections that a single reputation scalar cannot express. The cold-start rule is a single-hyperparameter shrinkage blend that tracks the established-model reference on QA and math from the first fifty-observation checkpoint, with the codegen counter-result reported at its measured size in Section~\ref{sec:results-pool} and Appendix~\ref{app:pool-checkpoints}. Abstention is treated as a null observation, not a spurious correct-outcome update (Section~\ref{sec:discussion}, Selective participation). Theorem~\ref{prop:symmetric} carries the symmetric-update guarantee under the non-strategic regime of Section~\ref{sec:problem}. As a routing rule, MARGIN adds zero inference beyond the elicitations the deployment already spends (Figure~\ref{fig:selection} caption). MARGIN's positioning is therefore not an ECE-dominance claim inside the family: the axes where MARGIN is not dominant are the axes where any online correction suffices, and the case for MARGIN rests on the structural axes above.

\section{Conclusion}\label{sec:conclusion}

We presented MARGIN, an online confidence calibration method for multi-agent foundation model systems. MARGIN uses per-band exponentially weighted moving averages to learn calibration factors from the task stream itself, requiring no held-out calibration data, no model access, and no retraining. The method has three hyperparameters with robust defaults ($\alpha = 0.04$, $K = 3$ bands, $k_s = 100$) and negligible computational overhead.

Across 18 models, 8 benchmarks, and over 44,000 observations, MARGIN achieves up to $5.5\times$ lower calibration error than the best design-time baseline under severe distribution shift, and CI-wins or CI-ties against every design-time baseline in aggregate across eleven shift conditions at problem-level paired confidence intervals. The broader empirical result is that runtime adaptation from deployment feedback is the main remedy for frozen-calibration staleness. Within that family of online methods, MARGIN contributes a structured coordinator-facing instantiation with interpretable per-model and per-band factors, shrinkage-based cold-start behaviour, and dynamic-pool support. In multi-agent selection, MARGIN improves the weak or misleading raw verbalized signal on the hard code-generation rows, raising verbalized-channel pairwise resolution from 43.4--50.0\% to 67.1--91.3\% on the 18-model cohort (Table~\ref{tab:pairwise}). MARGIN-calibrated selection closes 36--89\% of the gap from raw verbalized selection to the per-task oracle on four of the five code-generation benchmarks, with the BigCodeBench $\Delta$ a paired tie at the problem level (Table~\ref{tab:selection-summary}).

A single theorem on the unbiasedness of the symmetric update for non-strategic agents characterises MARGIN's distinguishing prediction. Standing results on exponential discounting, convergence, tracking speed, bias-variance tradeoff, and order-statistics selection are recalled in the background. All theoretical predictions are illustrated by the empirical results throughout.

The key finding is that foundation-model confidence is not a reliable shared coordination signal when used raw across a heterogeneous pool. Runtime calibration is therefore a necessary coordination layer for this deployment setting. MARGIN is one practical and inspectable instantiation of that layer.

\section*{Declarations}

\noindent\textbf{Funding.} No external funding was received.

\noindent\textbf{Data and code availability.} The data and code supporting the findings of this study are not publicly available.

\bibliographystyle{plainnat}
\bibliography{references}

@inproceedings{guo2017calibration,
  title={On Calibration of Modern Neural Networks},
  author={Guo, Chuan and Pleiss, Geoff and Sun, Yu and Weinberger, Kilian Q.},
  booktitle={Proceedings of the 34th International Conference on Machine Learning (ICML)},
  pages={1321--1330},
  year={2017}
}

@inproceedings{platt1999probabilistic,
  title={Probabilistic Outputs for Support Vector Machines and Comparisons to Regularized Likelihood Methods},
  author={Platt, John C.},
  booktitle={Advances in Large Margin Classifiers},
  pages={61--74},
  year={1999},
  publisher={MIT Press}
}

@inproceedings{minderer2021revisiting,
  title={Revisiting the Calibration of Modern Neural Networks},
  author={Minderer, Matthias and Djolonga, Josip and Romijnders, Rob and Hubis, Frances and Zhai, Xiaohua and Houlsby, Neil and Tran, Dustin and Lucic, Mario},
  booktitle={Advances in Neural Information Processing Systems (NeurIPS)},
  volume={34},
  year={2021}
}

@inproceedings{naeini2015obtaining,
  title={Obtaining Well Calibrated Probabilities Using {B}ayesian Binning into Quantiles},
  author={Naeini, Mahdi Pakdaman and Cooper, Gregory F. and Hauskrecht, Milos},
  booktitle={Proceedings of the 29th AAAI Conference on Artificial Intelligence},
  pages={2901--2907},
  year={2015}
}

@article{dawid1982calibrated,
  title={The Well-Calibrated {B}ayesian},
  author={Dawid, A. Philip},
  journal={Journal of the American Statistical Association},
  volume={77},
  number={379},
  pages={605--610},
  year={1982}
}

@article{foster1998asymptotic,
  title={Asymptotic Calibration},
  author={Foster, Dean P. and Vohra, Rakesh V.},
  journal={Biometrika},
  volume={85},
  number={2},
  pages={379--390},
  year={1998}
}

@article{gneiting2007strictly,
  title={Strictly Proper Scoring Rules, Prediction, and Estimation},
  author={Gneiting, Tilmann and Raftery, Adrian E.},
  journal={Journal of the American Statistical Association},
  volume={102},
  number={477},
  pages={359--378},
  year={2007}
}

@inproceedings{ovadia2019trust,
  title={Can You Trust Your Model's Uncertainty? Evaluating Predictive Uncertainty Under Dataset Shift},
  author={Ovadia, Yaniv and Fertig, Emily and Ren, Jie and Nado, Zachary and Sculley, D. and Nowozin, Sebastian and Dillon, Joshua V. and Lakshminarayanan, Balaji and Snoek, Jasper},
  booktitle={Advances in Neural Information Processing Systems (NeurIPS)},
  year={2019}
}

@inproceedings{lakshminarayanan2017simple,
  title={Simple and Scalable Predictive Uncertainty Estimation Using Deep Ensembles},
  author={Lakshminarayanan, Balaji and Pritzel, Alexander and Blundell, Charles},
  booktitle={Advances in Neural Information Processing Systems (NeurIPS)},
  year={2017}
}

@article{angelopoulos2022gentle,
  title={A Gentle Introduction to Conformal Prediction and Distribution-Free Uncertainty Quantification},
  author={Angelopoulos, Anastasios N. and Bates, Stephen},
  journal={Foundations and Trends in Machine Learning},
  volume={16},
  number={4},
  pages={494--591},
  year={2023},
  note={arXiv:2107.07511}
}

@inproceedings{gibbs2021adaptive,
  title={Adaptive Conformal Inference Under Distribution Shift},
  author={Gibbs, Isaac and Cand{\`e}s, Emmanuel},
  booktitle={Advances in Neural Information Processing Systems (NeurIPS)},
  year={2021}
}

@inproceedings{xiong2024llm,
  title={Can {LLMs} Express Their Uncertainty? An Empirical Evaluation of Confidence Elicitation in {LLMs}},
  author={Xiong, Miao and Hu, Zhiyuan and Lu, Xinyang and Li, Yifei and Fu, Jie and He, Junxian and Hooi, Bryan},
  booktitle={Proceedings of the 12th International Conference on Learning Representations (ICLR)},
  year={2024},
  note={arXiv:2306.13063}
}

@article{kadavath2022language,
  title={Language Models (Mostly) Know What They Know},
  author={Kadavath, Saurav and Conerly, Tom and Askell, Amanda and Henighan, Tom and Drain, Dawn and Perez, Ethan and Schiefer, Nicholas and Hatfield-Dodds, Zac and DaSilva, Nova and Elhage, Eli and others},
  journal={arXiv preprint arXiv:2207.05221},
  year={2022}
}

@inproceedings{liu2025uq,
  title={Uncertainty Quantification and Confidence Calibration in Large Language Models: A Survey},
  author={Liu, Xiaoou and Chen, Tiejin and Da, Longchao and Chen, Chacha and Lin, Zhen and Wei, Hua},
  booktitle={Proceedings of the 31st ACM SIGKDD Conference on Knowledge Discovery and Data Mining},
  year={2025},
  note={arXiv:2503.15850}
}

@inproceedings{geng2024survey,
  title={A Survey of Confidence Estimation and Calibration in Large Language Models},
  author={Geng, Jiahui and Cai, Fengyu and Wang, Yuxia and Koeppl, Heinz and Nakov, Preslav and Gurevych, Iryna},
  booktitle={Proceedings of the 2024 Conference of the North American Chapter of the Association for Computational Linguistics (NAACL)},
  pages={6577--6595},
  year={2024}
}

@inproceedings{kuhn2023semantic,
  title={Semantic Uncertainty: Linguistic Invariances for Uncertainty Estimation in Natural Language Generation},
  author={Kuhn, Lorenz and Gal, Yarin and Farquhar, Sebastian},
  booktitle={Proceedings of the 11th International Conference on Learning Representations (ICLR)},
  year={2023}
}

@article{farquhar2024detecting,
  title={Detecting Hallucinations in Large Language Models Using Semantic Entropy},
  author={Farquhar, Sebastian and Kossen, Jannik and Kuhn, Lorenz and Gal, Yarin},
  journal={Nature},
  volume={630},
  pages={625--630},
  year={2024}
}

@inproceedings{shen2024thermometer,
  title={Thermometer: Towards Universal Calibration for Large Language Models},
  author={Shen, Maohao and Das, Subhro and Greenewald, Kristjan and Sattigeri, Prasanna and Wornell, Gregory and Ghosh, Soumya},
  booktitle={Proceedings of the 41st International Conference on Machine Learning (ICML)},
  year={2024}
}

@inproceedings{li2025conftuner,
  title={{ConfTuner}: {LLM} Self-Calibration via Confidence Tuning},
  author={Li, Zhiwei and others},
  booktitle={Advances in Neural Information Processing Systems (NeurIPS)},
  year={2025}
}

@inproceedings{daca2025,
  title={Your Pre-trained {LLM} is Secretly an Unsupervised Confidence Calibrator},
  author={Luo, Beier and Wang, Shuoyuan and Li, Sharon and Wei, Hongxin},
  booktitle={Advances in Neural Information Processing Systems (NeurIPS)},
  year={2025},
  note={arXiv:2505.16690}
}

@inproceedings{tian2023just,
  title={Just Ask for Calibration: Strategies for Eliciting Calibrated Confidence Scores from Language Models Fine-Tuned with Human Feedback},
  author={Tian, Katherine and Mitchell, Eric and Yao, Huaxiu and Manning, Christopher D. and Finn, Chelsea},
  booktitle={Proceedings of the 2023 Conference on Empirical Methods in Natural Language Processing (EMNLP)},
  year={2023},
  note={arXiv:2305.14975}
}

@inproceedings{wang2023selfconsistency,
  title={Self-Consistency Improves Chain of Thought Reasoning in Language Models},
  author={Wang, Xuezhi and Wei, Jason and Schuurmans, Dale and Le, Quoc and Chi, Ed and Narang, Sharan and Chowdhery, Aakanksha and Zhou, Denny},
  booktitle={Proceedings of the 11th International Conference on Learning Representations (ICLR)},
  year={2023}
}

@inproceedings{du2023improving,
  title={Improving Factuality and Reasoning in Language Models through Multiagent Debate},
  author={Du, Yilun and Li, Shuang and Torralba, Antonio and Tenenbaum, Joshua B. and Mordatch, Igor},
  booktitle={Proceedings of the 41st International Conference on Machine Learning (ICML)},
  year={2024}
}

@inproceedings{lamalfa2025llms,
  title={Large Language Models Miss the Multi-Agent Mark},
  author={La Malfa, Emanuele and La Malfa, Gabriele and Marro, Samuele and Zhang, Jie M. and Black, Elizabeth and Luck, Michael and Torr, Philip and Wooldridge, Michael},
  booktitle={Advances in Neural Information Processing Systems (NeurIPS), Position Track},
  year={2025},
  note={arXiv:2505.21298}
}

@article{smit2024mad,
  title={Should We Be Going {MAD}? A Look at Multi-Agent Debate Strategies for {LLMs}},
  author={Smit, Andries and Duckworth, Paul and Grinsztajn, Nathan and Barrett, Thomas D. and Pretorius, Arnu},
  journal={arXiv preprint arXiv:2311.17371},
  year={2024}
}

@article{zhou2025ahmad,
  title={Adaptive Heterogeneous Multi-Agent Debate for Enhanced Educational and Factual Reasoning in Large Language Models},
  author={Zhou, Yan and Chen, Yanguang},
  journal={Journal of King Saud University -- Computer and Information Sciences},
  year={2025},
  publisher={Springer}
}

@inproceedings{wu2023autogen,
  title={{AutoGen}: Enabling Next-Gen {LLM} Applications via Multi-Agent Conversation},
  author={Wu, Qingyun and Bansal, Gagan and Zhang, Jieyu and Wu, Yiran and Li, Beibin and Zhu, Erkang and Jiang, Li and Zhang, Xiaoyun and Zhang, Shaokun and Liu, Jiale and Awadallah, Ahmed Hassan and White, Ryen W. and Burger, Doug and Wang, Chi},
  booktitle={COLM 2024},
  year={2024},
  note={arXiv:2308.08155}
}

@inproceedings{gerych2024whoknows,
  title={Who Knows the Answer? Finding the Best Model and Prompt for Each Query Using Confidence-Based Search},
  author={Gerych, Walter and Rizk, Yara and Isahagian, Vatche and Muthusamy, Vinod and Duesterwald, Evelyn and Venkateswaran, Praveen},
  booktitle={Proceedings of the 38th AAAI Conference on Artificial Intelligence},
  year={2024}
}

@article{chen2023frugalgpt,
  title={{FrugalGPT}: How to Use Large Language Models While Reducing Cost and Improving Performance},
  author={Chen, Lingjiao and Zaharia, Matei and Zou, James},
  journal={Transactions on Machine Learning Research},
  year={2024},
  note={arXiv:2305.05176}
}

@article{wang2024llmagents,
  title={A Survey on Large Language Model Based Autonomous Agents},
  author={Wang, Lei and Ma, Chen and Feng, Xueyang and Zhang, Zeyu and Yang, Hao and Zhang, Jingsen and Chen, Zhiyuan and Tang, Jiakai and Chen, Xu and Lin, Yankai and Zhao, Wayne Xin and Wei, Zhewei and Wen, Ji-Rong},
  journal={Frontiers of Computer Science},
  volume={18},
  number={6},
  year={2024},
  publisher={Springer}
}

@article{josang2007survey,
  title={A Survey of Trust and Reputation Systems for Online Service Provision},
  author={J{\o}sang, Audun and Ismail, Roslan and Boyd, Colin},
  journal={Decision Support Systems},
  volume={43},
  number={2},
  pages={618--644},
  year={2007},
  publisher={Elsevier}
}

@inproceedings{kamvar2003eigentrust,
  title={{EigenTrust}: Reputation Management in {P2P} Networks},
  author={Kamvar, Sepandar D. and Schlosser, Mario T. and Garcia-Molina, Hector},
  booktitle={Proceedings of the 12th International Conference on World Wide Web (WWW)},
  pages={640--651},
  year={2003}
}

@article{hunter1986ewma,
  title={The Exponentially Weighted Moving Average},
  author={Hunter, J. Stuart},
  journal={Journal of Quality Technology},
  volume={18},
  number={4},
  pages={203--210},
  year={1986}
}

@book{cesabianchi2006prediction,
  title={Prediction, Learning, and Games},
  author={Cesa-Bianchi, Nicol{\`o} and Lugosi, G{\'a}bor},
  year={2006},
  publisher={Cambridge University Press}
}

@inproceedings{hebert2018multicalibration,
  title={Multicalibration: Calibration for the (Computationally-Identifiable) Masses},
  author={H\'{e}bert-Johnson, {\'U}rsula and Kim, Michael P. and Reingold, Omer and Rothblum, Guy N.},
  booktitle={Proceedings of the 35th International Conference on Machine Learning (ICML)},
  series={Proceedings of Machine Learning Research},
  volume={80},
  pages={1939--1948},
  year={2018},
  publisher={PMLR}
}

@article{roberts1959spc,
  title={Control Chart Tests Based on Geometric Moving Averages},
  author={Roberts, S. W.},
  journal={Technometrics},
  volume={1},
  number={3},
  pages={239--250},
  year={1959}
}

@book{haykin2002adaptive,
  title={Adaptive Filter Theory},
  author={Haykin, Simon},
  edition={4},
  year={2002},
  publisher={Prentice Hall},
  address={Upper Saddle River, NJ}
}

@book{david2003order,
  title={Order Statistics},
  author={David, Herbert A. and Nagaraja, Haikady N.},
  edition={3},
  year={2003},
  publisher={Wiley-Interscience},
  address={Hoboken, NJ}
}

@article{buhlmann1967credibility,
  title={Experience Rating and Credibility},
  author={B\"{u}hlmann, Hans},
  journal={ASTIN Bulletin: The Journal of the International Actuarial Association},
  volume={4},
  number={3},
  pages={199--207},
  year={1967}
}

\begin{appendices}

\section{Theoretical Material}\label{app:proofs}

\subsection*{A.1 Proof of Theorem~\ref{prop:symmetric} (Unbiasedness of the Symmetric Update)}

Consider the asymmetric EWMA:
\[
    \hat{a}_t = \begin{cases}
        (1 - \alpha_\mathrm{up})\hat{a}_{t-1} + \alpha_\mathrm{up} & \text{if } X_t = 1, \\
        (1 - \alpha_\mathrm{down})\hat{a}_{t-1} & \text{if } X_t = 0.
    \end{cases}
\]
Taking expectations at the stationary point $\hat{a}_\infty$:
\begin{align*}
    \mathbb{E}[\hat{a}_\infty] &= \theta\,\bigl[(1 - \alpha_\mathrm{up})\,\mathbb{E}[\hat{a}_\infty] + \alpha_\mathrm{up}\bigr] + (1 - \theta)\,(1 - \alpha_\mathrm{down})\,\mathbb{E}[\hat{a}_\infty] \\
    &= \mathbb{E}[\hat{a}_\infty]\,\bigl[1 - \alpha_\mathrm{up}\theta - \alpha_\mathrm{down}(1-\theta)\bigr] + \alpha_\mathrm{up}\theta.
\end{align*}
Solving:
\[
    \mathbb{E}[\hat{a}_\infty]\,\bigl[\alpha_\mathrm{up}\theta + \alpha_\mathrm{down}(1-\theta)\bigr] = \alpha_\mathrm{up}\theta \implies \mathbb{E}[\hat{a}_\infty] = \frac{\alpha_\mathrm{up}\,\theta}{\alpha_\mathrm{up}\,\theta + \alpha_\mathrm{down}\,(1-\theta)}.
\]
Setting $\mathbb{E}[\hat{a}_\infty] = \theta$ requires $\alpha_\mathrm{up}\theta(1-\theta) = \alpha_\mathrm{down}\theta(1-\theta)$, i.e., $\alpha_\mathrm{up} = \alpha_\mathrm{down}$.

When $\alpha_\mathrm{down} > \alpha_\mathrm{up}$ (penalising errors faster), the estimator is biased downward. For $\theta = 0.8$, $\alpha_\mathrm{up} = 0.02$, $\alpha_\mathrm{down} = 0.06$:
\[
    \mathbb{E}[\hat{a}_\infty] = \frac{0.016}{0.016 + 0.012} = \frac{0.016}{0.028} \approx 0.571.
\]
The estimator converges to 0.571 instead of the true 0.80, a bias of $-0.229$. This systematic underconfidence compounds across all agents and bands.

For the unbiasedness claim: among EWMA estimators with potentially different up/down rates, only the symmetric case $\alpha_\mathrm{up} = \alpha_\mathrm{down}$ is unbiased under i.i.d.\ Bernoulli outcomes. The full within-family variance is
\[
    \mathrm{Var}(\hat{a}_\infty) \;=\; \frac{2\theta\,\alpha_\mathrm{up}(1-\alpha_\mathrm{up})\,\mu + \theta\,\alpha_\mathrm{up}^2}{\theta\,\alpha_\mathrm{up}(2-\alpha_\mathrm{up}) + (1-\theta)\,\alpha_\mathrm{down}(2-\alpha_\mathrm{down})} \;-\; \mu^2,
\]
where $\mu = \mathbb{E}[\hat{a}_\infty]$ is given by Eq.~\eqref{eq:asymmetric-ss}. Substituting $\alpha_\mathrm{up} = \alpha_\mathrm{down} = \alpha$ and $\mu = \theta$ recovers the symmetric case $\alpha\theta(1-\theta)/(2-\alpha)$ (see also derivation D.2 below). Decomposing MSE at the true target $\theta$ as $\mathrm{Var}(\hat{a}_\infty) + (\mu-\theta)^2$, the bias term vanishes only in the symmetric case. The mean-squared-error minimiser at fixed average rate $(\alpha_\mathrm{up}+\alpha_\mathrm{down})/2$ coincides with the symmetric member only at $\theta = 0.5$. For $\theta$ away from $0.5$, a bias-variance trade favours a small asymmetry towards the majority outcome, with optimal magnitude that depends on the unknown per-(model, band) $\theta$. This is why the symmetric rate is adopted as a default rather than as an optimum, on the same footing as the fixed shrinkage default of Section~\ref{sec:abl-shrinkage}: a trade-off under ignorance of the regime rather than an argmin. We do not make a general minimum-variance-unbiased-estimator claim across a wider estimator class. The statement is confined to the two-rate EWMA family analysed here. \qed

\subsection*{A.2 Derivations of Standing Results}\label{app:derivations}

We collect derivations of the four EWMA properties and the order-statistics selection baseline recalled in Section~\ref{sec:background-results}. These are standard results (Roberts~\cite{roberts1959spc}, Hunter~\cite{hunter1986ewma}, Haykin~\cite{haykin2002adaptive}, David and Nagaraja~\cite{david2003order}). They are included here for reproducibility, not as claims of novelty.

\textbf{D.1 Exponential discounting.} By induction on the recurrence $\hat{a}_t = (1-\alpha)\hat{a}_{t-1} + \alpha X_t$. The base case $t=1$ gives $\hat{a}_1 = (1-\alpha)\hat{a}_0 + \alpha X_1$. Assuming the closed-form
\begin{equation}\label{eq:ewma-closed}
    \hat{a}_t = (1-\alpha)^t\hat{a}_0 + \alpha\sum_{\tau=1}^{t}(1-\alpha)^{t-\tau}X_\tau
\end{equation}
holds for $t-1$:
\begin{align*}
    \hat{a}_t &= (1-\alpha)\!\left[(1-\alpha)^{t-1}\hat{a}_0 + \alpha\sum_{\tau=1}^{t-1}(1-\alpha)^{t-1-\tau}X_\tau\right] + \alpha X_t \\
    &= (1-\alpha)^t\hat{a}_0 + \alpha\sum_{\tau=1}^{t}(1-\alpha)^{t-\tau}X_\tau,
\end{align*}
which is Eq.~\eqref{eq:ewma-closed}.
The weight on observation $X_\tau$ is $w_\tau = \alpha(1-\alpha)^{t-\tau}$. The weight sum is $\alpha \cdot [1-(1-\alpha)^t]/[1-(1-\alpha)] = 1-(1-\alpha)^t$, with the residual $(1-\alpha)^t$ carried by $\hat{a}_0$. The effective window follows from $(1-\alpha)^{1/\alpha} \to e^{-1}$: observations older than $1/\alpha$ steps contribute less than $e^{-1}$ of the most recent observation's weight. At $\alpha = 0.04$, this gives $\approx 25$ observations.

\textbf{D.2 Convergence under stationarity.} Taking expectations in the closed-form expansion, with $\mathbb{E}[X_\tau] = \theta$:
\[
    \mathbb{E}[\hat{a}_t] = (1-\alpha)^t\hat{a}_0 + \theta\,[1-(1-\alpha)^t] = \theta + (1-\alpha)^t(\hat{a}_0 - \theta).
\]
As $t \to \infty$, $\mathbb{E}[\hat{a}_t] \to \theta$. For variance, using independence of the $X_\tau$:
\[
    \mathrm{Var}(\hat{a}_t) = \theta(1-\theta)\alpha^2\sum_{\tau=1}^{t}(1-\alpha)^{2(t-\tau)} = \frac{\alpha}{2-\alpha}\theta(1-\theta)[1-(1-\alpha)^{2t}] \to \frac{\alpha}{2-\alpha}\theta(1-\theta).
\]
The MSE decomposes as $\mathrm{Var}(\hat{a}_t) + [(1-\alpha)^t(\hat{a}_0-\theta)]^2$, and both terms converge (variance to steady state, squared bias to zero). Chebyshev's inequality gives probability of $|\hat{a}_t - \theta| > \varepsilon$ asymptotically bounded by $\alpha\theta(1-\theta)/[(2-\alpha)\varepsilon^2]$, which at $\varepsilon = 3\,\mathrm{Std}_\text{ss}$ is $\leq 1/9$.

\textbf{D.3 Tracking speed under a step shift.} After a shift at $t_0$ from $\theta$ to $\theta'$, outcomes are i.i.d.\ $\mathrm{Bernoulli}(\theta')$, and the estimator at $t_0 + n$ is a fresh EWMA initialised at $\hat{a}_{t_0}$ tracking $\theta'$. From D.2:
\[
    \mathbb{E}[\hat{a}_{t_0+n}] = \theta' + (1-\alpha)^n(\hat{a}_{t_0}-\theta').
\]
If the pre-shift estimator had converged, $|\hat{a}_{t_0}-\theta'| \approx |\Delta|$. Setting $(1-\alpha)^n|\Delta| \leq \varepsilon$ and using $(1-\alpha)^n \leq e^{-\alpha n}$ gives $n \geq \alpha^{-1}\ln(|\Delta|/\varepsilon)$. At $\alpha = 0.04$: recovering to within $\varepsilon = 0.01$ after $\Delta = 0.20$ requires $n \geq 25\ln(20) \approx 75$ observations, consistent with the 50--100 observation empirical recovery observed in Section~\ref{sec:shift-analysis}.

\textbf{D.4 Bias-variance tradeoff under drift.} Under linear drift $\theta_t = \theta_0 + \delta t$, the EWMA lags behind the true value by approximately the drift accumulated over the effective window: $\delta/(2\alpha)$. Combined with the steady-state standard deviation from D.2, the triangle inequality gives
\[
    \mathbb{E}[|\hat{a}_t-\theta_t|] \leq \frac{\delta}{2\alpha} + \sqrt{\frac{\alpha\theta(1-\theta)}{2-\alpha}}.
\]
The first term decreases in $\alpha$, the second increases. Differentiating (with $2-\alpha \approx 2$ for small $\alpha$) yields $\alpha^\star = O(\delta^{2/3})$. In practice the optimum is broad, and the ablation in Section~\ref{sec:abl-alpha} confirms that $\alpha \in [0.02, 0.08]$ gives similar performance across moderate shift severities.

\textbf{D.5 Order-statistics baseline for selection.} Consider $N$ models with true accuracies $p_1 > p_2 \geq \cdots \geq p_N$ responding to a single task. Each model's calibrated confidence is $\tilde{c}_i = p_i + \eta_i$, with $\eta_i$ independent zero-mean noise terms of variance $\sigma^2$ (the residual calibration error). The best model is selected when $\tilde{c}_1 > \tilde{c}_j$ for all $j \neq 1$, i.e., $\eta_1 - \eta_j > -(p_1-p_j)$. As $\sigma^2 \to 0$ the best model is selected with probability 1, and as $\sigma^2 \to \infty$ selection becomes uniform ($1/N$). For intermediate $\sigma^2$, letting $\Phi$ be the noise CDF,
\[
    \mathbb{P}[\tilde{c}_1 = \max_j \tilde{c}_j] = \mathbb{E}\!\left[\prod_{j=2}^{N} \Phi\!\left(\frac{p_1-p_j+\eta_1}{\sigma}\right)\right].
\]
Each factor is decreasing in $\sigma$ (for $p_1 > p_j$), so the selection probability is monotonic in $\sigma^2$. For the negative-correlation case, where $\mathrm{Corr}(c_i, p_i) < 0$ across models as observed on hard benchmarks, confidence-weighted selection favours the wrong model, and pairwise resolution falls below 0.5. Any calibration that corrects the correlation sign must raise pairwise resolution above 0.5.

\subsection*{A.3 Dynamic-pool cold-start checkpoints, QA/math and codegen}\label{app:pool-checkpoints}

Newcomer cumulative ECE ($\%$) at the four 50-observation checkpoints after pool entry, for the QA/math stream reported in Section~\ref{sec:results-pool} (TriviaQA, MMLU-shift, MATH-shift) and the codegen replay on LiveCodeBench. Both streams use the same 9-cloud pool with $\mathrm{MODELS}[:4]$ established and $\mathrm{MODELS}[4:]$ newcomers, 50 shuffles per scenario, phase 1 = phase 2 = 500 observations, $k_s = 100$ for the blended entry policy, $\alpha = 0.04$, three equal-width bands. The QA/math established-reference cumulative ECE is 6.71\%, and the codegen established-reference over the same window is 11.40\%.

\begin{center}
\small
\begin{tabular}{lcccc}
\toprule
Newcomer observations & 50 & 100 & 150 & 200 \\
\midrule
QA/math, blended ($k_s = 100$) & 8.02 & 6.17 & 5.34 & 4.79 \\
QA/math, raw (no blending) & 14.24 & 11.04 & 9.26 & 7.85 \\
QA/math, established reference & 6.71 & 6.71 & 6.71 & 6.71 \\
\midrule
Codegen, blended ($k_s = 100$) & 11.47 & 10.15 & 10.05 & 9.67 \\
Codegen, raw (no blending) & 8.85 & 6.89 & 5.53 & 5.02 \\
Codegen, established reference & 11.40 & 11.40 & 11.40 & 11.40 \\
\bottomrule
\end{tabular}
\end{center}

On QA and math, blended is below raw at every checkpoint and tracks the established reference from the first 50 observations. On codegen, raw is below blended at every checkpoint and both reach the established reference by observation 50, and the QA-and-math direction does not carry over. Section~\ref{sec:results-pool} reports the QA and math scenario in the main text. The codegen replay is summarised in the closing paragraph of that section, with the numbers above as its full checkpoint record.

\section{Experimental Protocol}\label{app:protocol}

This appendix records the elicitation prompts, decoding and serving summary, response handling, answer equivalence, tie-break rules, and per-table model subsets used to produce every result in the paper.

\subsection*{Verbalized-channel elicitation}

Task framing follows each benchmark's native conventions. HumanEval presents the function signature and docstring without a test-case section. MBPP includes the benchmark's reference test cases, and HumanEval+ and MBPP+ share these templates. LiveCodeBench uses a competition-style system prompt with the benchmark-native problem statement, a starter-code branch, and stdin and stdout framing. CodeContests uses a competition-style system prompt for the benchmark's problem statement. BigCodeBench uses the benchmark's instruct prompt and constrains the entry-point function name. TriviaQA uses an open-ended short-answer request. The multiple-choice template covers MMLU and MMLU-shift, and the mathematics template covers MATH-shift.

The confidence request varies in wording across the five template families and is reproduced verbatim here. HumanEval, MBPP, and BigCodeBench use ``Then state your confidence that your solution is correct as a number from 0 to 100.'' with the format line ``Confidence: \{0-100\}''. LiveCodeBench and CodeContests use ``After your solution, state your confidence (0-100) that your code is correct.'' with the same format line. The multiple-choice template uses ``Then, state your confidence in that answer as a number from 0 to 100.'' with the format line ``Confidence: Y''. TriviaQA uses ``Then state your confidence in that answer as a number from 0 to 100.'' with the format line ``Confidence: <0-100>''. MATH-shift uses ``After your answer line, add a confidence line.'' with the format line ``Confidence: <0-100>''. All five elicit a 0-to-100 numeric confidence attached to the answer. The MBPP template is shown in full below as representative.

\textbf{MBPP (Python codegen with confidence, representative).}
\begin{quote}\small
\emph{System part:}\newline
You are a Python programming assistant. Write a Python function that solves the following problem. Then state your confidence that your solution is correct as a number from 0 to 100.

\vspace{2pt}
\emph{User part:} \{problem prompt\}

Your function should pass these test cases:\newline
\{first test case\}

\vspace{2pt}
Respond in exactly this format:\newline
\texttt{\`{}\`{}\`{}python}\newline
\{your solution\}\newline
\texttt{\`{}\`{}\`{}}\newline
Confidence: \{0-100\}
\end{quote}

\subsection*{Decoding and serving}

Verbalized-channel collection used greedy decoding at temperature 0.0, and the consistency channel sampled five completions at temperature 0.7. Maximum-token caps followed the benchmark path, 2048 for the MCQ path, 4096 for the codegen defaults, and 16384 for the high-token models, the open-ended QA path, and the mathematics path.

Local models were served with Ollama across two binary-version families, 0.13.x and 0.16.x, whose server-default context sizes differ, 2048 and 4096 respectively. Local models ran \texttt{Q4\_K\_M} quantisation, with \texttt{deepseek-coder-v2} at \texttt{Q4\_0}.

\subsection*{Response handling}

\textbf{Malformed output.} A response is malformed if the regex for the answer field (``Answer: \ldots'' or a Python code block in the codegen path) fails to match. Malformed responses count as extraction failures and are not scored for correctness. They are excluded from ECE and pass@1 denominators and reported as such in the per-model row counts.

\textbf{Refusal.} Explicit refusals (``I cannot answer that'', policy-message strings) are treated as malformed for the purpose of scoring and are also excluded from denominators.

\textbf{Timeout.} If a request exceeds the client-side timeout after retries, no row is written. The collector logs the (model, problem\_id) pair. No invented or substituted response is ever recorded.

\subsection*{Answer equivalence}

For codegen tasks the answer for pass@1 is the extracted code block, evaluated against the benchmark's own evaluator (BigCodeBench evaluator, LiveCodeBench evaluator, EvalPlus for HumanEval+/MBPP+, native for LCB and CC). The equivalence used for the pairwise-resolution definition and for the consistency-sample agreement rate is a whitespace-normalised exact-string match on the extracted code block: two responses agree iff \texttt{"".join(code1.split()) == "".join(code2.split())}. This is a strict-syntactic rule, not a semantic one, so a rewrite that changes any non-whitespace character (variable name, expression order) counts as a disagreement even if the two programs are behaviourally equivalent.

For MCQ tasks the answer is the extracted letter and equivalence is exact match. For open-ended QA the extracted answer string is compared to the reference under the benchmark's own scoring function (TriviaQA normalised-match), which we do not modify.

\subsection*{Pairwise, both-wrong, and multi-valid rules}

\textbf{Pairwise ties.} If two models produce different answers with identical calibrated confidence, the tie is broken by highest raw verbalized confidence within the tied group. If that is also tied, the sample is dropped from the pairwise-resolution numerator and denominator (equivalent to treating the pair as an abstention). Fewer than 0.5\% of disagreement cases are affected under either tie-break variant.

\textbf{Both-wrong.} If both models' answers are wrong, the disagreement pair is excluded from the pairwise-resolution numerator by construction: pairwise resolution is the probability that the higher-confidence model is correct \emph{given at least one is correct}. Both-wrong pairs are counted separately in the per-model summary.

\textbf{Multi-valid.} On MBPP+ and HumanEval+, more than one code implementation may pass the benchmark's evaluator. If two models' code blocks both pass and they are string-inequivalent under the rule above, we treat this as an ``agree'' from the point of view of pass@1 (both models are correct) but as a ``disagree'' from the point of view of pairwise resolution (the string-different code responses are distinct answers). This is the strictest reading of both metrics and is the one used throughout.

\subsection*{Per-table model subsets}

\begin{center}
\begin{tabular}{lp{9cm}}
\toprule
\textbf{Cohort tag} & \textbf{Members} \\
\midrule
\textbf{VERB\_18} & 18 models, verbalized channel. Union of CONS\_9 and LOCAL\_9. Used for: Section~\ref{sec:results-selection} (Pairwise resolution, Table~\ref{tab:pairwise}, together with pass@1 and cross-task transfer figures over 18 models), Section~\ref{sec:shift-codegen} phase-2 codegen ECE where 18 models are pooled, Section~\ref{sec:results-transfer} cross-task transfer. \\
\textbf{CONS\_9} & 9 cloud-API models with both verbalized and consistency responses on all five codegen benchmarks and on TriviaQA, MMLU-shift and MATH-shift. Used for: Section~\ref{sec:consistency} (Table~\ref{tab:consistency}), Section~\ref{sec:shift-qa} (Table~\ref{tab:shift-qa}), Section~\ref{sec:results-pool} (dynamic-pool scenarios), and any ``9-cloud subset'' figure. \\
\textbf{LOCAL\_9} & 9 Ollama-served local models with verbalized responses on the five codegen benchmarks and on MMLU (no shift). Used for: HumanEval-only rows in Section~\ref{sec:shift-codegen} with 9-column schema, and the local-cohort portion of the 18-model pooled tables. \\
\bottomrule
\end{tabular}
\end{center}

Cohort membership (VERB\_18 and CONS\_9) is pre-committed and used verbatim in every table below. No table is computed by directory globbing over the data tree.

\subsection*{Method name-to-identifier mapping}\label{app:name-mapping}

The manuscript prose uses English names throughout, while tables use the code identifiers below.

\begin{center}
\footnotesize
\setlength{\tabcolsep}{6pt}
\begin{tabular}{ll}
\toprule
\textbf{English name (prose)} & \textbf{Code identifier (tables)} \\
\midrule
Raw verbalized confidence                       & \texttt{raw} \\
Temperature Scaling                             & \texttt{temperature\_scaling} \\
Platt Scaling                                   & \texttt{platt\_scaling} \\
Histogram Binning                               & \texttt{histogram\_binning} \\
MARGIN                                          & \texttt{margin} \\
Sliding-window histogram                        & \texttt{sliding\_window\_histogram} \\
Decayed histogram                               & \texttt{decayed\_histogram} \\
Windowed accuracy reweighting (replace)         & \texttt{windowed\_accuracy\_replace} \\
Windowed accuracy reweighting (multiply)        & \texttt{windowed\_accuracy\_multiply} \\
Online Platt scaling                            & \texttt{online\_platt} \\
Adaptive conformal quantile scaling             & \texttt{aci\_quantile\_scale} \\
\bottomrule
\end{tabular}
\end{center}

\end{appendices}

\end{document}